\newtheorem{lemma}{Lemma}
\newtheorem{cor}{Corollary}
\newtheorem{assum}{Assumption}
\newtheorem{theorem}{Theorem}
\newcommand{\R}{\mathbb{R}}
\def \real    { \mathbb{R} }
\newcommand{\e}{\begin{equation}}
\newcommand{\ee}{\end{equation}}
\newcommand{\en}{\begin{equation*}}
\newcommand{\een}{\end{equation*}}
\newcommand{\eqn}{\begin{eqnarray}}
\newcommand{\eeqn}{\end{eqnarray}}
\newcommand{\bmat}{\begin{bmatrix}}
\newcommand{\emat}{\end{bmatrix}}
\DeclareMathAlphabet\mathbfcal{OMS}{cmsy}{b}{n}
\newcommand{\vct}[1]{\boldsymbol{#1}}
\newcommand{\mtx}[1]{\boldsymbol{#1}}
\newcommand{\<}{\langle}
\renewcommand{\>}{\rangle}
\def \vec       {\operatorname*{vec}}
\newcommand{\wt}{\widetilde}
\newcommand{\ol}{\overline}
\newcommand{\calN}{\mathcal{N}}
\newcommand{\va}{\vct{a}}
\newcommand{\vs}{\vct{s}}
\newcommand{\vy}{\vct{y}}
\newcommand{\vtheta}{\vct{\theta}}
\newcommand{\vphi}{\vct{\phi}}
\newcommand{\mA}{\mtx{A}}
\newcommand{\mW}{\mtx{W}}
\newcommand{\mX}{\mtx{X}}
\newcommand{\mY}{\mtx{Y}}
\newcommand{\mZ}{\mtx{Z}}
\newcommand{\mTheta}{\mtx{\Theta}}
\newlength{\imgwidth}
\renewcommand{\mathbf}{\boldsymbol}
\def \endprf{\hfill {\vrule height6pt width6pt depth0pt}\medskip}
\newenvironment{proof}{\noindent {\bf Proof} }{\endprf\par}
\def\@IEEEsectpunct{\ \,}
\def\paragraph{\@startsection{paragraph}{4}{\z@}{1.5ex plus 1.5ex minus 0.5ex}%
{0ex}{\bfseries}}
\begin{document}

\title{On the Convergence of Gradient Descent on Learning Transformers with Residual Connections}

\author{Zhen Qin,  Jinxin Zhou, Jiachen Jiang, and Zhihui Zhu, \IEEEmembership{Member, IEEE}
\thanks{
Zhen Qin is with Michigan Institute for Computational Discovery and Engineering, Department of Electrical Engineering and Computer Science, Department of Statistics, University of Michigan, Ann Arbor, MI 48109 USA, and also with the Department of Computer Science and Engineering, The Ohio State University, Columbus, OH 43210 USA (e-mail: zhenqin@umich.edu).

Jinxin Zhou, Jiachen Jiang and Zhihui Zhu are with the Department of Computer Science and Engineering, the Ohio State University, OH 43210, USA (e-mail: \{zhou.3820, jiang.2880, zhu.3440\}@osu.edu).

This work was supported by NSF grants ECCS-2409701 and IIS-2402952.
}
}

\maketitle

\begin{abstract}
Transformer models have emerged as fundamental tools across various scientific and engineering disciplines, owing to their outstanding performance in diverse applications.
Despite this empirical success, the theoretical foundations of Transformers remain relatively underdeveloped, particularly in understanding their training dynamics. Existing research predominantly examines isolated components--such as self-attention mechanisms and feedforward networks--without thoroughly investigating the interdependencies between these components, especially when residual connections are present. In this paper, we aim to bridge this gap by analyzing the convergence behavior of a structurally complete yet single-layer Transformer, comprising self-attention, a feedforward network, and residual connections. We demonstrate that, under appropriate initialization, gradient descent exhibits a linear convergence rate, where the convergence speed is determined by the minimum and maximum singular values of the output matrix from the attention layer. Moreover, our convergence analysis establishes a theoretical characterization of residual connections by showing that they alleviate the ill-conditioning of the attention output matrix, which arises from the low-rank structure induced by the softmax operation, thereby improving optimization stability. Empirical results corroborate our theoretical insights, illustrating the beneficial role of residual connections in promoting convergence stability.
\end{abstract}

\begin{IEEEkeywords}
Transformer, residual connections,  feedforward network, training dynamics, stability.
\end{IEEEkeywords}

\IEEEpeerreviewmaketitle

\section{Introduction}

\IEEEPARstart{T}RANSFORMER model architectures \cite{vaswani2017attention} have gained widespread recognition and popularity in various scientific and engineering applications, consistently achieving outstanding performance across numerous domains. Notably, Transformers have achieved substantial success in natural language processing tasks \cite{radford2019language}, recommendation systems \cite{zhou2018deep}, reinforcement learning \cite{chen2021decision}, computer vision \cite{dosovitskiy2020image}, multi-modal signal processing \cite{tsai2019multimodal}, quantum information \cite{ma2025tomography}, and communication systems \cite{kim2023transformer}. One prominent example is their impressive performance in large language models such as GPT-4 \cite{achiam2023gpt}, where Transformers play a central role in achieving unprecedented language generation capabilities. However, despite their empirical success, the theoretical understanding of Transformers remains limited, posing significant challenges in analyzing their fundamental mechanisms and performance guarantees.

To address this gap, an expanding body of theoretical literature has explored various facets of Transformer models, including the impact of initialization \cite{makkuva2024local}, sample complexity guarantees \cite{ildiz2024self}, scaling limits \cite{bordelon2024infinite}, and implicit regularization effects \cite{ataee2023max}. A crucial research direction emerging from these efforts is the investigation of training dynamics in Transformers, with particular emphasis on the two fundamental components within each layer: the feedforward network \cite{arora2018convergence,nguyen2020global,chatterjee2022convergence,qin2024convergence} and the self-attention mechanism \cite{zhang2024trained,huang2023context,yang2024context,zhang2025training,li2024training,song2024unraveling}. Existing studies have provided important insights into the optimization landscape and convergence behavior of these modules, establishing guarantees under various model and data assumptions. Notably, most theoretical investigations analyze these components independently, typically characterizing their properties in isolation. While such analyses have substantially advanced the understanding of individual modules, they often overlook the intricate interactions between self-attention, feedforward network, and architectural elements such as residual connections. As a result, the extent to which theoretical guarantees derived for isolated components extend to the full Transformer architecture remains insufficiently understood.

Beyond investigating individual components, only recent work \cite{wu2023convergence} studies the convergence rate of a softmax attention layer combined with a feedforward network using a single ReLU activation function, without imposing any specific restrictions on the input matrix.  This work shows that, with proper initialization, gradient descent can achieve a linear convergence rate. Moreover, it demonstrates that under suitable conditions, commonly used initialization schemes such as LeCun and He, and NTK initialization satisfy the requirements for convergence.
\begin{figure}[!t]
    \centering
    \includegraphics[width=8.5cm]
    {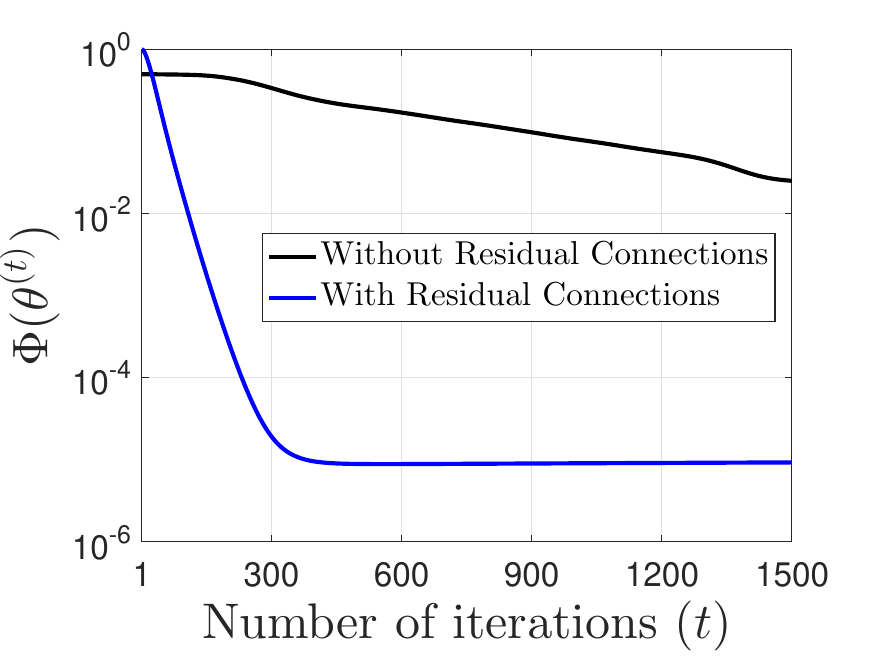}
    \caption{Comparison of training dynamics of single-layer Transformers with and without residual connections.}
    \label{fig:comparison_with_without_residual}
\end{figure}
However, a key limitation of this analysis is the assumption that the weight matrices in the feedforward network are identity matrices, which restricts its applicability to more general settings. Consequently, the results only partially capture the training dynamics of a single-layer Transformer.
Furthermore, while previous works have advanced the theoretical understanding of residual connections--such as \cite{hardt2016identity}, which proved that deep linear residual networks, and \cite{liu2019towards}, which showed that a two-layer non-overlapping convolutional residual network does not suffer from spurious local optima due to the use of residual connections, and \cite{scholkemper2024residual}, which demonstrated that residual connections alleviate the oversmoothing problem in graph neural networks—the theoretical understanding of their role in Transformers remains underdeveloped. To motivate further investigation, we illustrate in \Cref{fig:comparison between with and without residual} that even a single-layer Transformer benefits from residual connections, exhibiting a faster convergence rate compared to its counterpart without them. This phenomenon can be intuitively explained by the occurrence of rank collapse \cite{noci2022signal} in certain scenarios, leading to an ill-conditioned output matrix in the softmax attention layer, which, in turn, may hinder the convergence speed during Transformer training.
In this regard, residual connections could help mitigate such issues by stabilizing the training process. To the best of our knowledge, the convergence behavior of Transformers jointly involving self-attention, feedforward networks, and residual connections has not yet been theoretically characterized.

{\bf Our contribution:} In this paper, we analyze the convergence behavior of gradient descent for a single-layer Transformer that jointly integrates single-head self-attention, a feedforward network, and residual connections. We demonstrate that, with proper initialization, gradient descent achieves a linear convergence rate. In addition, our analysis establishes that the linear convergence rate is governed by the extreme singular values of the attention output matrix. This result naturally leads to a convergence-theoretic interpretation of residual connections, showing that they mitigate the ill-conditioning and thereby improve optimization behavior. Experimental results validate our theoretical findings, confirming the linear convergence rate of gradient descent and demonstrating the beneficial effect of introducing residual connections.

\subsection{Notation} We use  bold capital letters (e.g., $\mY$) to denote matrices,  bold lowercase letters (e.g., $\vy$) to denote vectors, and italic letters (e.g., $y$) to denote scalar quantities.  Elements of matrices are denoted in parentheses, as in Matlab notation. For example, $\mY(s_1, s_2)$ denotes the element in position
$(s_1, s_2)$ of the matrix $\mY$.
$\|\mX\|$ and $\|\mX\|_F$ respectively represent the spectral norm and Frobenius norm of $\mX$.
$\sigma_{i}(\mX)$ is the $i$-th singular value of $\mX$ and the condition number $\mX$ is defined as $\kappa(\mX) = \frac{\|\mX\|}{\sigma_{\min}(\mX)}$. For two positive quantities $a,b\in \real$, the inequality $b\lesssim a$ or $b = O(a)$ means $b\leq c a$ for some universal constant $c$; likewise, $b\gtrsim a$ or $b = \Omega(a)$ represents $b\ge ca$ for some universal constant $c$.

\section{Problem Setting}
\label{sec: problem setting}

This section formalizes the architecture of the Transformer model and the corresponding training objective. We consider a simplified setting involving a single-layer Transformer equipped with single-head self-attention, a feedforward network and residual connections. Given an input matrix $\mX \in \R^{M \times d}$, where $M$ denotes the number of discrete tokens and $d$ is the embedding dimension, the model is specified as follows:
\begin{eqnarray}
    \label{transformer structure}
    F_{\mTheta}(\mX) = (\text{FFN}(\text{Attn}(\mX) + \mX) + \text{Attn}(\mX) + \mX)\mW_U,
\end{eqnarray}
where $\mTheta = \{\mW_1, \mW_2,\mW_Q, \mW_K, \mW_V, \mW_U \}$. For analytical simplicity, layer normalization is not incorporated in our formulation, aligning with prior convergence analyses that likewise omit it in \cite{song2024unraveling, wu2023convergence}. We now provide a detailed introduction to each component of the model as follows.
\begin{itemize}

\item \textbf{Self-Attention Mechanism}: A self-attention mechanism is a central component of the Transformer architecture, enabling contextualization of token representations across layers. We denote the attention head function by $\text{Attn}(\cdot)$. In this work, we consider the standard softmax attention \cite{vaswani2017attention}, for which $\text{Attn}(\cdot)$ is defined as follows:
    \begin{eqnarray}
    \label{self attention model}
    \text{Attn}(\mX) :=  \phi_s\bigg(\frac{\mX\mW_Q\mW_K^\top \mX^\top}{\sqrt{d_{QK}}}\bigg)\mX\mW_V \in\R^{M\times d},
    \end{eqnarray}
    where $\mW_Q \in\R^{d \times d_{QK}}$, $\mW_K\in\R^{d \times d_{QK}}$ and $\mW_V\in\R^{d\times d}$ denote the query, key, and value weight matrices, respectively. The function $\phi_s(\cdot)$ denotes the row-wise softmax operation.

\item \textbf{Feedforward Network}: The feedforward network (FFN) in each Transformer block comprises two learnable weight matrices, $\mW_1\in\R^{d\times d_1}$ and $\mW_2\in\R^{d_1\times d}$. In line with recent architectures such as LLaMA \cite{touvron2023llama}, PaLM \cite{chowdhery2023palm}, and OLMo \cite{groeneveld2024olmo}, we omit bias terms. The feedforward function $\text{FFN}(\cdot)$ is defined as
    \begin{eqnarray}
    \label{definition FFN model}
    \text{FFN}(\mZ(\mX)) := \phi_r(\mZ(\mX)\mW_1)\mW_2 \in\R^{M\times d},
    \end{eqnarray}
    where $\mZ(\mX) = \text{Attn}(\mX) + \mX$ and $\phi_r$ is a general element-wise activation function.

\item \textbf{Residual Connections}: Residual connections are incorporated to promote stable training and improve gradient flow in deep Transformer architectures. For each sub-layer (e.g., self-attention or feedforward), the residual connection adds the input (denoted by $\mX$ or $\mZ(\mX)$) of the sub-layer to its output.

\item \textbf{Unembedding Layer}: In the final layer of the Transformer, the residual stream is projected into the vocabulary space via the unembedding matrix $\mW_U \in \R^{d \times N}$. Without loss of generality, we include the unembedding matrix; for the convergence analysis of intermediate layers, this matrix can be treated as a constant $1$.
\end{itemize}

For clarity, the full sequence of the operations described above is illustrated in \Cref{Figure of single layer transformer}.
\begin{figure}[!h]
\centering
\includegraphics[width=12.5cm, keepaspectratio]%
{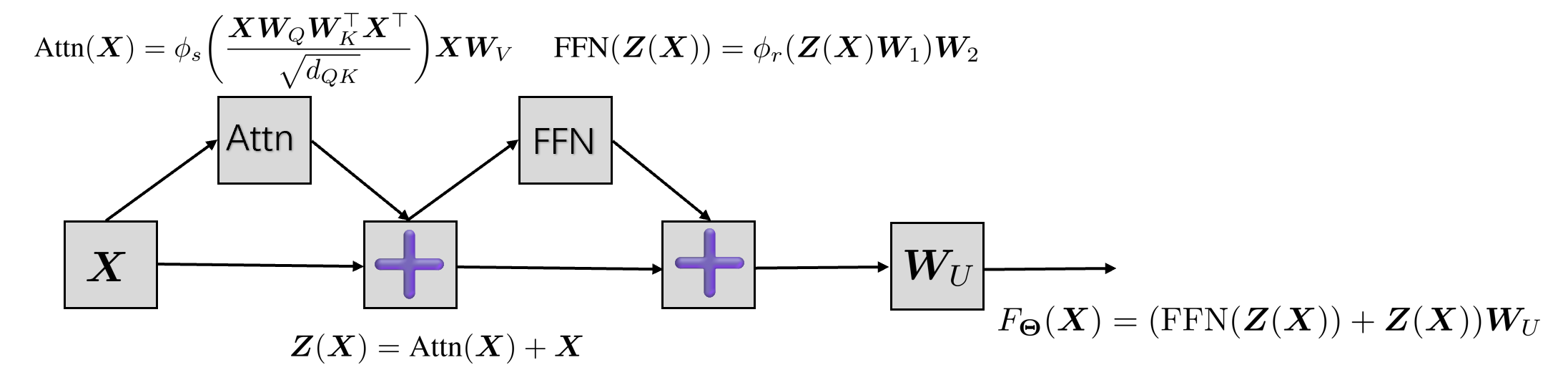}
\caption{Illustration of a single-layer Transformer with single-head attention, a feedforward network and residual connections.}
\label{Figure of single layer transformer}
\end{figure}

Based on the Transformer model described above, we consider the supervised learning setting with a dataset $\{\mX_p, \mY_p\}_{p=1}^P\in\{\R^{M \times d},\R^{M \times N} \} $. The training objective is to minimize the following squared Frobenius norm loss:
\begin{eqnarray}
    \label{loss function in the transformer structure}
     &\!\!\!\!\!\!\!\!&L(\mW_1, \mW_2,\mW_Q, \mW_K, \mW_V, \mW_U)\nonumber\\
      &\!\!\!\! = \!\!\!\!&\frac{1}{2}\sum_{p=1}^P\|F_{\mTheta}(\mX_p) - \mY_p \|_F^2 = \frac{1}{2}\|\ol F_{\mTheta}(\mX) - \ol \mY \|_F^2,
\end{eqnarray}
where $\ol F_{\mTheta}(\mX) = \begin{bmatrix} F_{\mTheta}(\mX_1)^\top \cdots F_{\mTheta}(\mX_p)^\top  \end{bmatrix}^\top\in\R^{MP\times N}$ and $\ol \mY = \begin{bmatrix} \mY_1^\top \cdots \mY_P^\top  \end{bmatrix}^\top\in\R^{MP\times N}$.

\section{Convergence Analysis}
\label{sec: convergence analysis}

In this section, we provide a theoretical analysis of the optimization method associated with problem \eqref{loss function in the transformer structure}.  Specifically, we investigate the convergence behavior of the vanilla gradient descent (GD) algorithm\footnote{To streamline the notation, , let us represent $\nabla_{\mW_b} L(\mW_1^{(t)}, \mW_2^{(t)}, \mW_Q^{(t)},\\  \mW_K^{(t)},  \mW_V^{(t)},  \mW_U^{(t)})$ as $\nabla_{\mW_b^{(t)}} L$.}, which updates each parameter according to
\begin{eqnarray}
    \label{every gradient descent}
    \mW_b^{(t+1)} = \mW_b^{(t)} - \mu \nabla_{\mW_b^{(t)}} L,
\end{eqnarray}
where $b \in \{ 1,2,Q,K,V,U\}$ and $\mu$ is the learning rate. All detailed gradient expressions are presented in {Appendix}~\ref{sec:summary of gradients}.
To facilitate our analysis, we introduce the following assumption on the Lipschitz condition of the activation functions.

\begin{assum}[Lipschitz condition of the activation function]
\label{assumption of activation function}
Let $\sigma_r$ be a non-decreasing function satisfying  $|\sigma_r(x) - \sigma_r(y)| \leq |x - y|$ for every $x, y \in \mathbb{R}$.
\end{assum}

This condition holds for several commonly used activation functions. For instance, the ReLU function $\phi_r(x) = \max\{0, x\}$ satisfies this assumption. Moreover, smooth approximations of ReLU, such as the Gaussian-smoothed ReLU discussed in~\cite{nguyen2020global}, also comply with this property.

Having established the above assumptions, we now vectorize the model output $\ol F_{\mTheta}(\mX)$ and the ground truth matrix $\ol\mY$ to simplify notation and streamline the analysis. Specifically, we let $f_{\vtheta}(\mX)   = \vec(\ol F_{\mTheta}(\mX))$ and $\vy =  \vec(\ol\mY)$, where the parameter vector $\vtheta$ is constructed by stacking the vectorized weights as $\vtheta = [\vec(\mW_1)^\top \ \vec(\mW_2)^\top \ \vec(\mW_U)^\top $ $   \vec(\mW_V)^\top \ \vec(\mW_Q)^\top \ \vec(\mW_K)^\top ]^\top$. Using this compact formulation, the original objective~\eqref{loss function in the transformer structure} can be equivalently rewritten as a standard least-squares loss:
\begin{eqnarray}
    \label{loss function of transformer another}
    \Phi(\vtheta) = \frac{1}{2}\|f_{\vtheta}(\mX) - \vy \|_2^2.
\end{eqnarray}
This reformulation allows us to analyze the convergence behavior of GD in a more tractable form, leveraging standard tools from vector-valued function analysis. Now, we present our convergence analysis as follows.
\begin{theorem}
\label{Theorem of convergence analysis of transformer model simplified}
Given a dataset $\{\mX_p, \mY_p\}_{p=1}^P$, we consider a single-layer Transformer architecture in \eqref{transformer structure}.   We define $\mZ^{(0)}(\mX_p) = \phi_s(\mX_p\mW_Q^{(0)}{\mW_K^{(0)}}^\top \mX_p^\top/\sqrt{d_{QK}})\mX_p\mW_V^{(0)} + \mX_p$, $\ol \lambda = \max_{b} \|\mW_b^{(0)}\|$ and  $\underline{\lambda} = \min_{b}  \sigma_{\min}(\mW_b^{(0)})$ where $b\in\{1,2,U,V,Q,K\}$. In addition, we assume  that the initialization matrices $\{\mW_b^{(0)} \}$ are either full row rank or full column rank, and are properly initialized\footnote{For the detailed initialization requirements, refer to \eqref{requirement of the initialization transformer theorem  detailed}.}.
Define $\vphi_P^{(t)} = \begin{bmatrix}(\phi_r(\mZ^{(t)}(\mX_1)\mW_1^{(t)}))^\top \cdots   (\phi_r(\mZ^{(t)}(\mX_P)\mW_1^{(t)}))^\top   \end{bmatrix}$ and $\alpha = \frac{\sigma_{\min}^2(\mW_U^{(0)})\sigma_{\min}^2(\vphi_P^{(0)})}{16}$. Using the gradient descent in \eqref{every gradient descent}, we have
\begin{eqnarray}
    \label{convergence analysis of GD transformer in the theorem detailed main paper}
    \Phi(\vtheta^{(t+1)}) \leq   (1 - \mu \alpha)\Phi(\vtheta^{(t)}),
\end{eqnarray}
where the learning rate satisfies $\mu \leq \min\{ \frac{1}{C}, \frac{1}{\alpha}\}$, and the constant $C$ is given by $C = \wt C \ol \lambda^5 \max_{p}\|\mX_p\|^3 \max_{p}\|\mZ^{(0)}(\mX_p)\|$ with $\wt C$ being a positive constant.
\end{theorem}
According to Theorem~\ref{Theorem of convergence analysis of transformer model simplified}, under suitable initialization, the GD algorithm described in~\eqref{every gradient descent} exhibits a linear convergence rate. In contrast to existing works such as \cite{wu2023convergence,song2024unraveling}, which analyze simplified architectures, our results rigorously characterize the convergence dynamics of a single-layer Transformer that jointly integrates single-head self-attention, a feedforward layer, and residual connections. The analysis requires a unified treatment of these components, rather than a straightforward combination of separate module-wise analyses; full technical details are provided in {Appendix}~\ref{proof of convergence rate in the tranformer model simplified}. It is worth emphasizing that our focus lies in establishing the local convergence behavior of GD. The initialization strategies, which have been extensively studied in prior literature (e.g., \cite{nguyen2020global,wu2023convergence}), are beyond the scope of this work.

\subsection{Global Minimum}

Next, we illustrate that the solution can converge to a global minimum, denoted as $\vtheta^\star$, satisfying $\Phi(\vtheta^\star) = 0$. This result highlights the theoretical guarantee of achieving optimal parameter estimation in Transformer models through gradient descent.
\begin{cor}
\label{Theorem of convergence analysis for theta of transformer model}
Under the same setting as in \Cref{Theorem of convergence analysis of transformer model simplified}, utilizing gradient descent, the following convergence rate holds:
\begin{eqnarray}
    \label{convergence rate of theta transformer main conclusion}
    \|\vtheta^{(k)} - \vtheta^\star\|_2 \leq (1 - \mu\alpha)^{\frac{k}{2}}\frac{ C_W}{\alpha} \Phi^{\frac{1}{2}}(\vtheta^{(0)}).
\end{eqnarray}
where the learning rate satisfies $\mu \leq \min\{ \frac{1}{C}, \frac{1}{\alpha}\}$ in which $C$  and $\alpha$ have been defined in \Cref{Theorem of convergence analysis of transformer model simplified}. In addition, we define $C_W = O(\ol \lambda\max_{p}\|\mZ^{(0)}(\mX_p)\| + (1 + \ol\lambda^2)(\max_{p}\|\mZ^{(0)}(\mX_p)\| + \ol \lambda \max_{p}\|\mX_p\|) + \max_{i,p}\|\mX_p(i,:)\|_2\|\mX_p\|^2(1+ \ol\lambda^2)\ol\lambda^3) $.
\end{cor}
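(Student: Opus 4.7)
The plan is to derive the corollary from \Cref{Theorem of convergence analysis of transformer model simplified} by showing that the gradient descent iterates $\{\vtheta^{(t)}\}$ form a Cauchy sequence with a geometrically decreasing tail, so that the limit $\vtheta^\star$ exists, satisfies $\Phi(\vtheta^\star)=0$ by the linear convergence of $\Phi$, and can be bounded by telescoping. Concretely, I would begin from the GD update $\vtheta^{(t+1)} - \vtheta^{(t)} = -\mu \nabla \Phi(\vtheta^{(t)})$ and express the gradient through the Jacobian of the vectorized map $f_{\vtheta}$, namely $\nabla \Phi(\vtheta) = J_{\vtheta}^\top (f_{\vtheta}(\mX) - \vy)$. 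This yields the pointwise bound
\begin{equation*}
\|\vtheta^{(t+1)} - \vtheta^{(t)}\|_2 \;\le\; \mu \,\|J_{\vtheta^{(t)}}\|\cdot \|f_{\vtheta^{(t)}}(\mX) - \vy\|_2 \;=\; \mu\, \sqrt{2}\,\|J_{\vtheta^{(t)}}\|\cdot \Phi^{1/2}(\vtheta^{(t)}).
\end{equation*}

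The next step is to obtain a uniform upper bound $\|J_{\vtheta^{(t)}}\| \lesssim C_W$ valid for every iterate. For this I would reuse the iterate-control machinery already established in the proof of \Cref{Theorem of convergence analysis of transformer model simplified}: under the initialization condition \eqref{simplified initialization requirement main paper}, the weight matrices $\mW_b^{(t)}$ stay inside a neighborhood of their initializations, so their spectral norms are controlled by $\ol\lambda$ and the relevant data quantities $\max_p \|\mX_p\|$, $\max_p\|\mZ^{(0)}(\mX_p)\|$, $\max_{i,p}\|\mX_p(i,:)\|_2$. Differentiating the architecture \eqref{transformer structure} with respect to each of $\mW_1, \mW_2, \mW_Q, \mW_K, \mW_V, \mW_U$ (using \Cref{assumption of activation function} to control the derivatives of $\phi_r$ and the standard $1$-Lipschitzness of the row-wise softmax $\phi_s$) yields block-wise Jacobian norm bounds whose sum is precisely of the stated form $C_W = O(\ol\lambda\max_{p}\|\mZ^{(0)}(\mX_p)\| + (1+\ol\lambda^2)(\max_{p}\|\mZ^{(0)}(\mX_p)\| + \ol\lambda\max_p\|\mX_p\|) + \max_{i,p}\|\mX_p(i,:)\|_2\|\mX_p\|^2(1+\ol\lambda^2)\ol\lambda^3)$.

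Combining the two ingredients with the linear rate $\Phi(\vtheta^{(t)}) \le (1-\mu\alpha)^t \Phi(\vtheta^{(0)})$ from \Cref{Theorem of convergence analysis of transformer model simplified}, I would telescope:
\begin{equation*}
\|\vtheta^{(k)} - \vtheta^{(k+n)}\|_2 \;\le\; \sum_{t=k}^{k+n-1} \mu \sqrt{2}\,C_W\, \Phi^{1/2}(\vtheta^{(t)}) \;\le\; \mu\sqrt{2}\,C_W\,\Phi^{1/2}(\vtheta^{(0)}) \sum_{t=k}^\infty (1-\mu\alpha)^{t/2}.
\end{equation*}
The geometric tail evaluates to $(1-\mu\alpha)^{k/2}/(1-\sqrt{1-\mu\alpha})$, and the elementary bound $1-\sqrt{1-x}\ge x/2$ for $x\in[0,1]$ converts this into $\tfrac{2}{\mu\alpha}(1-\mu\alpha)^{k/2}$. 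Letting $n\to\infty$ gives existence of $\vtheta^\star$ (with $\Phi(\vtheta^\star)=0$ by continuity and the linear loss decay) and yields
\begin{equation*}
\|\vtheta^{(k)} - \vtheta^\star\|_2 \;\le\; \frac{2\sqrt{2}\,C_W}{\alpha}\,(1-\mu\alpha)^{k/2}\,\Phi^{1/2}(\vtheta^{(0)}),
\end{equation*}
which is \eqref{convergence rate of theta transformer main conclusion} after absorbing the constant $2\sqrt{2}$ into $C_W$.

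The main obstacle, and the part that requires the most care, is establishing the uniform Jacobian bound across all iterations: one must argue that every iterate stays inside the region in which $\ol\lambda$, $\underline\lambda$, and the activation-pattern-dependent quantities $\sigma_{\min}(\phi_r(\mZ^{(t)}(\mX_p)\mW_1^{(t)}))$ remain comparable to their initial values. This does not require a fresh argument, since the proof of \Cref{Theorem of convergence analysis of transformer model simplified} must already establish exactly this kind of invariance in order to iterate the per-step descent inequality; the corollary just repackages those iterate bounds into the single constant $C_W$. Everything else is a standard Cauchy-sequence argument for linearly convergent gradient descent on a nonconvex least-squares loss under a local Polyak--Łojasiewicz-type condition.
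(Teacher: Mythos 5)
Your proposal is correct and follows essentially the same route as the paper's proof: the paper likewise telescopes $\|\vtheta^{(j)}-\vtheta^{(i)}\|_2\le\mu\sum_b\sum_s\|\nabla_{\mW_b}L\|_F$, reuses the per-block gradient bounds (equivalently, your block-wise Jacobian bounds) and the iterate-invariance established by induction in the proof of \Cref{Theorem of convergence analysis of transformer model simplified}, sums the geometric series with the same $1-(1-\mu\alpha)^{1/2}\gtrsim\mu\alpha$ estimate, and concludes $\Phi(\vtheta^\star)=0$ by continuity. The only cosmetic difference is that you phrase the gradient bound via the operator norm of the full Jacobian rather than summing block Frobenius norms, which changes nothing given the $O(\cdot)$ definition of $C_W$.
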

The proof has been provided in {Appendix}~\ref{appendix:convergence to the global minimum}. This result guarantees convergence to the global minimum, given that the step size $\mu$ is properly chosen and the initialization conditions are satisfied. Specifically, \eqref{convergence rate of theta transformer main conclusion} highlights that as the number of iterations $k$ increases, the parameter estimate $\vtheta^{(k)}$ converges to $\vtheta^\star$ at a linear convergence rate.

\subsection{A Convergence-Theoretic Perspective on Residual Connections}
\label{sec: The importance of residual connections}

Building upon the preceding convergence analysis, we first demonstrate how the convergence rate is determined by the minimum and maximum singular values of the attention-layer output matrix. We then provide a convergence-theoretic perspective on residual connections in Transformers, highlighting their role in mitigating the ill-conditioning of the attention-layer output matrix.

To facilitate the subsequent derivation, we assume that the entries of the weight matrices $\mW_1^{(0)}$ and $\mW_U^{(0)}$ are independently drawn from Gaussian distributions with zero mean and variances $\gamma_1^2$ and $\gamma_U^2$, respectively, i.e., $\mW_1^{(0)}(i,j) \sim \calN(0, \gamma_1^2)$ and $\mW_U^{(0)}(i,j) \sim \calN(0, \gamma_U^2)$. In addition, we adopt the ReLU activation function and impose the conditions $d_1/4 \geq d$ and $N/4 \geq d$. Under these settings, with high probability, the lower bound of $\alpha$ can be expressed as
\begin{eqnarray}
    \label{lower bound of alpha}
    \alpha    &\!\!\!\! \gtrsim \!\!\!\!& \frac{d_1\gamma_1^2\gamma_U^2(\mu_1(\sigma_r))^2}{128}(\frac{\sqrt{N}}{2} - \sqrt{d})^2\nonumber\\
    &\!\!\!\! \!\!\!\!& \cdot(\frac{\sqrt{d_1}}{2} - \sqrt{d})^2\min_{p}\sigma_{\min}^2(\mZ^{(0)}(\mX_p)),
\end{eqnarray}
where $\mu_1(\sigma_r)$ denotes the first Hermite coefficient of the ReLU function, which satisfies $\mu_1(\sigma_r) > 0$. Specifically, the first inequality follows Weyl's inequality and the second inequality follows from the singular value inequality for Gaussian matrices, as stated in \Cref{spectral norm of Gaussian matrix inequality} of Appendix~\ref{Technical tools used in proofs}, along with results from \cite[Lemma C.2]{nguyen2020global} and \cite[Lemma 5.3]{nguyen2021tight}. Specifically, the inequality $\sigma_{\min}^2(\vphi_P^{(0)})\geq \sigma_{\min}^2((\phi_r(\mZ^{(0)}(\mX_p)\mW_1^{(0)}))) - \sum_{a\neq b}\|\phi_r(\mZ^{(0)}(\mX_p)\mW_1^{(0)}) (\phi_r(\mZ^{(0)}(\mX_p)\mW_1^{(0)}))^\top\|  $ ensures the bound holds. Nevertheless, under appropriate initialization, we ensure $\alpha > 0$, and thus focus primarily on the leading term in the final bound.

First, we examine from a theoretical perspective how the convergence rate is influenced by the attention layer.
When the weight matrices are initialized with Gaussian distributions, their spectral norms and minimum singular values can be bounded, as established in \cite[Lemma 4]{qin2025convergence}. In this case, the positive values $\ol \lambda$, $\underline{\lambda}$ and $\|\mX_p\|$ exert the same influence on the convergence analysis for Transformer model regardless of the presence of residual connections. Consequently, leveraging the lower bound in \eqref{lower bound of alpha}, when $\frac{1}{C} < \frac{1}{\alpha}$, the convergence rate $1 - \mu \alpha$ is predominantly influenced by the term $\frac{\min_{p}\sigma_{\min}^2(\mZ^{(0)}(\mX_p))}{\max_{p}\|\mZ^{(0)}(\mX_p) \|}  $. Here $\mZ^{(0)}(\mX_p) = \phi_s\big(\frac{\mX_p\mW_Q^{(0)}{\mW_K^{(0)}}^\top \mX_p^\top}{\sqrt{d_{QK}}}\big)\mX_p\mW_V^{(0)} + \mX_p$ is determined by the residual associated with the input data $\mX_p$. Thus, the term $1 - C_1\frac{\min_{p}\sigma_{\min}^2(\mZ^{(0)}(\mX_p))}{\max_{p}\|\mZ^{(0)}(\mX_p) \|}$ demonstrates how the convergence rate is determined by the properties of the attention-layer output and quantifies the influence of residual connections.

Next, we illustrate the role of residual connections in Transformers from a convergence-theoretic perspective.
Building on the above convergence-rate formulation, we now turn to extreme scenarios where the attention output may become ill-conditioned, in order to highlight how residual connections mitigate such issues. From a theoretical perspective, it is essential to recognize that when the constant $C_1$ is fixed, there may exist extreme cases in which $\text{Attn}(\mX)$ becomes ill-conditioned, potentially compromising the model's stability and convergence. For instance, as established by \cite[Theorem A.2]{noci2022signal}, consider the scenario where $d<\infty$ is fixed and $d_{QK}\to \infty$. In this regime, for any input $\mX$, it can be shown that $\text{Attn}(\mX)\to \frac{1}{M}\mathbf{1}_{M \times M}\mX\mW_V$, where $\mathbf{1}_{M \times M}$ denotes an $M\times M$ matrix with all elements equal to one. Consequently, in the absence of a residual connection (i.e., $\mZ^{(0)}(\mX) = \text{Attn}(\mX)$), the resulting matrix $\mZ^{(0)}(\mX)$ is rank-one, yielding $\sigma_{\min}(\mZ^{(0)}(\mX)) \to 0$, implying that the convergence curve will remain flat. However, when the residual connection is introduced,  the resulting matrix $\mZ^{(0)}(\mX)\to   \frac{1}{M}\mathbf{1}_{M \times M}\mX\mW_V + \mX$ maintains full rank as long as the input matrix $\mX$ is of full rank, thereby ensuring that the minimum singular value $\sigma_{\min}(\mZ^{(0)}(\mX))$ is strictly positive. {\it This enhancement in the rank structure fundamentally guarantees a reduction in the risk of convergence stagnation, thereby theoretically demonstrating the crucial role of residual connections in maintaining the stability and effectiveness of Transformers under extreme conditions.}

\section{Experimental Results}
\label{sec:experiment result}

In this section, we investigate the impact of residual connections on the training dynamics using real-world data.

\begin{figure}[!ht]
\centering

\begin{subfigure}[t]{0.43\textwidth}
    \centering
    \includegraphics[width=\linewidth]{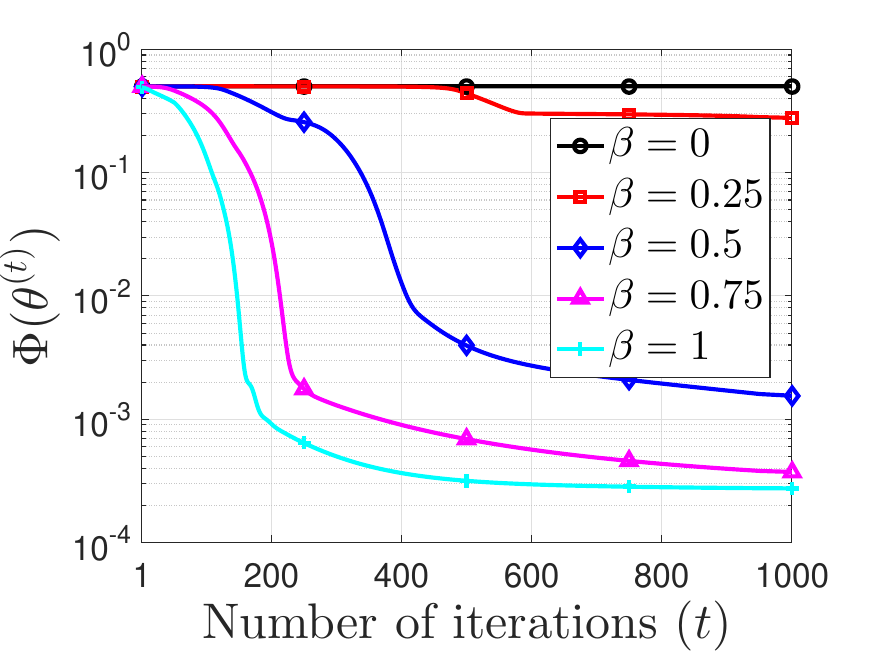}
    \caption{}
    \label{diff_beta_L1}
\end{subfigure}
\hfill
\begin{subfigure}[t]{0.43\textwidth}
    \centering
    \includegraphics[width=\linewidth]{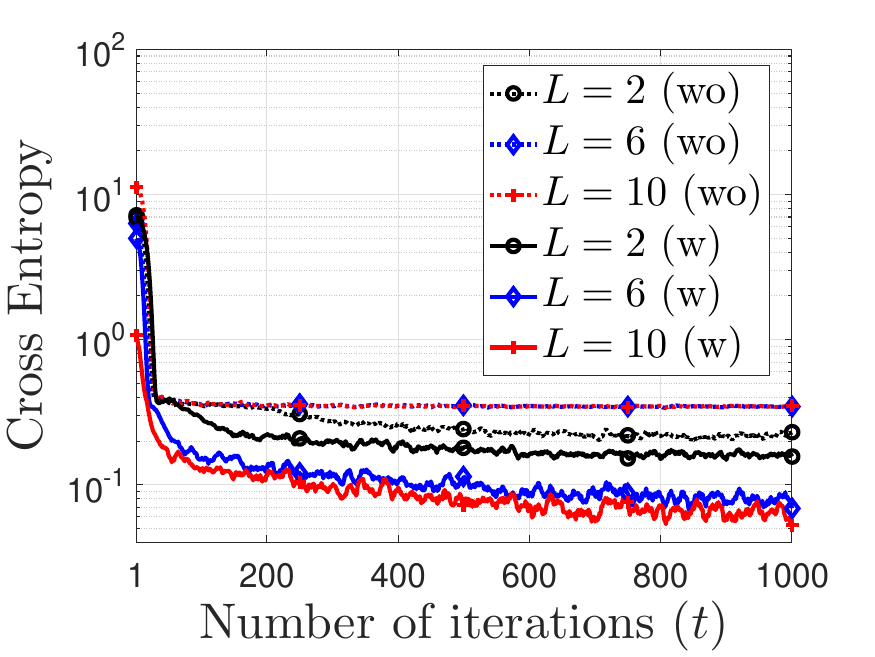}
    \caption{}
    \label{diff_language_model}
\end{subfigure}

\caption{
Training dynamics of Transformer models with and without residual connections:
(a) one-layer Transformers with varying residual coefficients $\beta$;
(b) $L$-layer Transformers trained with (w) and without (wo) residual connections.
}
\label{Figure of summary}
\end{figure}

To this end, we conduct our first experiment on the Jena Climate Dataset~\cite{attri2020timeseries}, a multivariate time-series benchmark that has been widely adopted in recent studies~\cite{duong2023dynamic,victor2024enhancing}. This experiment is designed to evaluate the role of residual connections within the modeling framework considered in this work. The specific experimental setup follows that of~\cite[Section~5]{qin2025convergence}. Drawing inspiration from \cite{zhang2024residual}, where a residual coefficient $\beta$ is introduced to construct the model as
$F_{\mTheta}(\mX) = (\text{FFN}(\text{Attn}(\mX) + \beta\mX) + \beta(\text{Attn}(\mX) + \beta\mX))\mW_U$, it was demonstrated that setting
$\beta$ smaller than $1$ can enhance test performance. Motivated by this observation, we systematically examine the effect of varying the residual coefficient  $\beta$ on the convergence rate. As depicted in \Cref{diff_beta_L1}, we observe that increasing the value of $\beta$ accelerates convergence.
Although the convergence rate for $\beta = 0.5$ or $0.75$ is slower than that of $\beta = 1$, it remains significantly faster than the model without residual connections, reinforcing the critical role of residual coefficients in efficient learning.
Additionally, for $\beta \in \{0, 0.25, 0.5, 0.75, 1\}$, the values of
$\frac{\min_{p}\sigma_{\min}^2(\mZ^{(0)}(\mX_p))}{\max_{p}\|\mZ^{(0)}(\mX_p) \|}  $ are given by $\{7.74\times 10^{-14},  0.16,0.32, 0.48, 0.56 \}$, highlighting the substantial influence of residual connections on convergence speed.

In the second experiment, we further empirically validate the effect of residual connections by conducting sentiment classification on the SST-2 dataset \citep{socher2013recursive}. All models are initialized from GPT-2 (small) \citep{radford2019language}, which consists of 12 Transformer layers with a hidden dimension of 768 and 12 attention heads. To isolate architectural effects, we truncate the pretrained model to its first $L$ layers and optionally remove residual connections. We evaluate models with and without residual connections for $L \in \{2, 6, 10\}$. Cross-entropy loss is used for training, and the detailed training procedure follows that of the third experiment in \cite{qin2026learning}. As shown in \Cref{diff_language_model}, Transformers equipped with residual connections consistently achieve lower training error than their counterparts without residual connections. Moreover, for models with residual connections, the training error further decreases as the number of layers $L$ increases.

\section{Conclusion}
\label{sec:conclusion}

This paper presented a comprehensive analysis of the convergence behavior of a single-layer Transformer architecture, incorporating a single-head self-attention mechanism, a feedforward neural network, and residual connections. Our theoretical investigation established that, under proper initialization, the gradient descent method exhibits a linear convergence rate. The convergence speed is primarily determined by the minimum and maximum singular values  of the output matrix produced by the attention layer, emphasizing the critical role of residual connections in maintaining numerical stability. Experimental results further substantiated our theoretical insights.


\newpage
\balance
{\small

}

\clearpage

\onecolumn

\appendices

\section{Gradient Derivations}
\label{sec:summary of gradients}

To simplify notation, we omit the explicit dependence of the loss function on all weight matrices and write $\nabla_{\mW_b}L$ instead of $\nabla_{\mW_b}L(\mW_1, \mW_2, \mW_Q, \mW_K, \mW_V, \mW_U)$ for $b \in \{1, 2, Q, K, V, U\}$. The gradients are summarized as follows:
\begin{eqnarray}
    \label{gradient of W1}
    \nabla_{\mW_1} L &\!\!\!\!=\!\!\!\!& \sum_{p=1}^P\mZ(\mX_p)^\top \bigg( \phi_r'(\mZ(\mX_p)\mW_1) \odot \bigg((F_{\mTheta}(\mX_p) - \mY_p)\mW_U^\top\mW_2^\top\bigg)\bigg),\\
    \label{gradient of W2}
    \nabla_{\mW_2} L &\!\!\!\!=\!\!\!\!& \sum_{p=1}^P(\phi_r(\mZ(\mX_p)\mW_1))^\top(F_{\mTheta}(\mX_p) - \mY_p)\mW_U^\top,\\
    \label{gradient of WU}
    \nabla_{\mW_U} L &\!\!\!\!=\!\!\!\!& \sum_{p=1}^P(\phi_r(\mZ(\mX_p)\mW_1)\mW_2 +\mZ(\mX_p))^\top(F_{\mTheta}(\mX_p) - \mY_p),\\
    \label{gradient of WV}
    \nabla_{\mW_V} L &\!\!\!\!=\!\!\!\!&\sum_{p=1}^P\mX_p^\top\phi_s\bigg(\frac{\mX_p\mW_Q\mW_K^\top \mX_p^\top}{\sqrt{d_{QK}}}\bigg)^\top \bigg( \phi_r'(\mZ(\mX_p)\mW_1) \odot \bigg((F_{\mTheta}(\mX_p) - \mY_p)\mW_U^\top\mW_2^\top\bigg)\bigg)\mW_1^\top\nonumber\\
    &\!\!\!\!\!\!\!\!&   + \sum_{p=1}^P\mX_p^\top\phi_s\bigg(\frac{\mX_p\mW_Q\mW_K^\top \mX_p^\top}{\sqrt{d_{QK}}}\bigg)^\top(F_{\mTheta}(\mX_p) - \mY_p)\mW_U^\top,\\
    \label{gradient of WQ}
    \nabla_{\mW_Q} L  &\!\!\!\!=\!\!\!\!& \sum_{p=1}^P\sum_{i=1}^{M}\mX_p^\top(i,:)(F_{\mTheta}(\mX_p)(i,:) - \mY_p(i,:))\mW_U^\top\mW_V^\top\mX_p^\top\phi_s'\bigg(\frac{\mX_p(i,:)\mW_Q\mW_K^\top \mX_p^\top}{\sqrt{d_{QK}}}\bigg)\mX_p\mW_K \nonumber\\
    &\!\!\!\!\!\!\!\!&  + \sum_{p=1}^P\sum_{i=1}^{M}\mX_p^\top(i,:)\bigg( \phi_r'(\mZ(\mX_p)(i,:)\mW_1) \odot \bigg((F_{\mTheta}(\mX_p)(i,:) - \mY_p(i,:))\mW_U^\top\mW_2^\top\bigg)\bigg)\nonumber\\
    &\!\!\!\!\!\!\!\!&\cdot\mW_1^\top\mW_V^\top\mX_p^\top\phi_s'\bigg(\frac{\mX_p(i,:)\mW_Q\mW_K^\top \mX_p^\top}{\sqrt{d_{QK}}}\bigg)\mX_p\mW_K,\\
    \label{gradient of WK}
    \nabla_{\mW_K} L  &\!\!\!\!=\!\!\!\!& \sum_{p=1}^P\sum_{i=1}^{M}\mX_p^\top\bigg((F_{\mTheta}(\mX_p)(i,:) - \mY_p(i,:))\mW_U^\top\mW_V^\top\mX_p^\top \phi_s'\bigg(\frac{\mX_p\mW_Q\mW_K^\top \mX_p^\top}{\sqrt{d_{QK}}}\bigg)\bigg)^\top\mX_p(i,:)\mW_Q \nonumber\\
    &\!\!\!\!\!\!\!\!&  +\sum_{p=1}^P\sum_{i=1}^{M}\mX_p^\top\bigg(\bigg( \phi_r'(\mZ(\mX_p)(i,:)\mW_1) \odot \bigg((F_{\mTheta}(\mX_p)(i,:) - \mY_p(i,:))\mW_U^\top\mW_2^\top\bigg)\bigg)\nonumber\\
    &\!\!\!\!\!\!\!\!& \cdot \mW_1^\top\mW_V^\top\mX_p^\top \phi_s'\bigg(\frac{\mX_p(i,:)\mW_Q\mW_K^\top \mX_p^\top}{\sqrt{d_{QK}}}\bigg)\bigg)^\top\mX_p(i,:)\mW_Q.
\end{eqnarray}
Note that to derive the last two gradients, we need to rewrite the loss function as $L(\mW_1, \mW_2,\mW_Q, \mW_K, \mW_V, \mW_U)= \frac{1}{2}\sum_{p=1}^P\sum_{i=1}^{M}\|(\phi_r((\phi_s(\frac{\mX_p(i,:)\mW_Q\mW_K^\top \mX_p^\top}{\sqrt{d_{QK}}})\mX_p\mW_V + \mX_p(i,:))\mW_1)\mW_2 +\phi_s(\frac{\mX_p(i,:)\mW_Q\mW_K^\top \mX_p^\top}{\sqrt{d_{QK}}})\mX_p\mW_V  + \mX_p(i,:)) \mW_U  - \mY_p(i,:) \|_2^2$. Here, the softmax function $\phi_s: \R^{1 \times M} \rightarrow \R^{1 \times M}$ is defined as
\begin{eqnarray}
    \label{definition of softmax function}
    \phi_s(\va) = \begin{bmatrix} \frac{e^{\va(1,:)}}{\sum_{i=1}^M e^{\va(i,:)}} \ \cdots \ \frac{e^{\va(M,:)}}{\sum_{i=1}^M e^{\va(i,:)}} \end{bmatrix}.
\end{eqnarray}
Furthermore, we define the Jacobian of the softmax function, evaluated at
\begin{eqnarray}
    \label{each row in the softmax derivative}
    \phi_s'\bigg(\frac{\mX_p(i,:)\mW_Q\mW_K^\top \mX_p^\top}{\sqrt{d_{QK}}}\bigg) = \text{diag}(\vs) - \vs^\top\vs,
\end{eqnarray}
where $\vs = \phi_s(\frac{\mX_p(i,:)\mW_Q\mW_K^\top \mX_p^\top}{\sqrt{d_{QK}}})$ is the softmax output vector.

\section{Detailed version of \Cref{Theorem of convergence analysis of transformer model simplified}}
\label{detailed version of convergence rate in the tranformer model simplified}

Before proceeding to the proof of \Cref{Theorem of convergence analysis of transformer model simplified}, we first present and prove the following auxiliary theorem.
\begin{theorem}
\label{Theorem of convergence analysis of transformer model detailed}
Given a dataset $\{\mX_p, \mY_p\}_{p=1}^{P}$, we consider a single-layer Transformer architecture equipped with a single-head self-attention mechanism, a feedforward neural network and residual connections. The goal is to model or approximate the underlying distribution of the data. Define $\alpha = \frac{\sigma_{\min}^2(\mW_U^{(0)})\sigma_{\min}^2(\vphi_P^{(0)})}{16}$. When the initialization satisfies
\begin{eqnarray}
    \label{requirement of the initialization transformer theorem  detailed}
    \Phi^{\frac{1}{2}}(\vtheta^{(0)}) &\!\!\!\!\leq \!\!\!\!& \min\Bigg\{ \frac{2\sqrt{2}\alpha (1 - (1 - \mu \alpha)^{\frac{1}{2}})}{27\sqrt{P}\max_{p}\|\mZ^{(0)}(\mX_p)\|}\cdot\min\bigg\{  \frac{\sigma_{\min}(\mW_1^{(0)})}{ \|\mW_2^{(0)}\| \|\mW_U^{(0)}\|}, \frac{\sigma_{\min}(\mW_2^{(0)})}{ \|\mW_1^{(0)}\| \|\mW_U^{(0)}\|} \bigg\},\nonumber\\
    &\!\!\!\!\!\!\!\!& \frac{2\sqrt{2}\alpha (1 - (1 - \mu \alpha)^{\frac{1}{2}})}{27\sqrt{P}(1 + \|\mW_1^{(0)} \| \|\mW_2^{(0)}\|)}\cdot\min\bigg\{ \frac{\sigma_{\min}(\mW_U^{(0)})}{\max_{p}\|\mZ^{(0)}(\mX_p)\|}, \frac{\sigma_{\min}(\mW_V^{(0)})}{ \sqrt{M}\max_{p}\|\mX_p\|\|\mW_U^{(0)}\|}\bigg\},\nonumber\\
    &\!\!\!\!\!\!\!\!& \frac{4\sqrt{2}\alpha (1 - (1 - \mu \alpha)^{\frac{1}{2}})}{243\sqrt{MP}\max_{i,p}\|\mX_p(i,:)\|_2\|\mX_p\|^2(1 + \|\mW_1^{(0)}\|\|\mW_2^{(0)}\|)\|\mW_U^{(0)}\|\|\mW_V^{(0)}\|}\min\bigg\{ \frac{\sigma_{\min}(\mW_Q^{(0)})}{\|\mW_K^{(0)}\|}, \frac{\sigma_{\min}(\mW_K^{(0)})}{\|\|\mW_Q^{(0)}\|}\bigg\},\nonumber\\
    &\!\!\!\!\!\!\!\!& \frac{\min_{p}\sigma_{\min}(\mZ^{(0)}(\mX_p))}{2C_1},2\sqrt{C_2},  \frac{\min\{\min_{p}\sigma_{\min}(\phi_r(\mZ^{(0)} \mW_1^{(0)})),\sigma_{\min}(\vphi_P^{(0)}) \} }{3C_1P\|\mW_1^{(0)}\| + \frac{27\sqrt{2}P^{\frac{3}{2}}}{4\alpha (1 - (1 - \mu \alpha)^{\frac{1}{2}})}\|\mZ^{(0)}\|^2 \|\mW_2^{(0)}\| \|\mW_U^{(0)}\|} \Bigg\},
\end{eqnarray}
where $  C_1 = \frac{2187M\sqrt{2P}\max_{p}\|\mX_p\|^3(\max_{i,p}\|\mX_p(i,:)\|_2\|\mX_p\|^2)\|\mW_U^{(0)}\|\|\mW_V^{(0)}\|^2(\|\mW_Q^{(0)}\|^2 + \|\mW_K^{(0)}\|^2 )}{32\sqrt{d_{QK}}\alpha (1 - (1 - \mu \alpha)^{\frac{1}{2}})}(1 + \|\mW_1^{(0)}\|\|\mW_2^{(0)}\|) \\  +\frac{27\sqrt{2}M\sqrt{P}(\max_{p}\|\mX_p\|)^2\|\mW_U^{(0)}\|}{8\alpha (1 - (1 - \mu \alpha)^{\frac{1}{2}})}(1  + \|\mW_1^{(0)} \| \|\mW_2^{(0)}\|) $ and $ C_2  =  \frac{6561\|\mW_2^{(0)}\|^2\|\mW_U^{(0)}\|^2\max_{p}\|\mZ^{(0)}(\mX_p)\|^2\sigma_{\min}^2(\mW_1^{(0)})}{64} + \\ \frac{81\max_{p}\|\mZ^{(0)}(\mX_p)\|^2\sigma_{\min}^2(\mW_U^{(0)})}{4}  +  \big(9\max_{p}\|\mZ^{(0)}(\mX_p)\|^2 + \frac{729\|\mW_1^{(0)}\|^2\|\mW_2^{(0)}\|^2\|\mW_U^{(0)}\|^2}{16}  \big)\big(\frac{27 M\sqrt{P}(\max_{p}\|\mX_p\|)^2\sigma_{\min}^2(\mW_V^{(0)})}{4} + \\ \frac{2187M\sqrt{P}\max_{p}\|\mX_p\|^3(\max_{i,p}\|\mX_p(i,:)\|_2\|\mX_p\|^2)\|\mW_V^{(0)}\|^2\|\mW_Q^{(0)}\|^2\sigma_{\min}^2(\mW_K^{(0)})}{16d_{QK}} +  \\ \frac{2187M\sqrt{P}\max_{p}\|\mX_p\|^3(\max_{i,p}\|\mX_p(i,:)\|_2\|\mX_p\|^2)\|\mW_V^{(0)}\|^2\|\mW_K^{(0)}\|^2\sigma_{\min}^2(\mW_Q^{(0)})}{16d_{QK}} \big) $.
Using the gradient descent in \eqref{every gradient descent}, we have
\begin{eqnarray}
    \label{convergence analysis of GD transformer in the theorem detailed}
    \Phi(\vtheta^{(t+1)}) \leq   (1 - \mu \alpha)\Phi(\vtheta^{(t)}),
\end{eqnarray}
where the learning rate satisfies $\mu \leq \min\{ \frac{1}{C}, \frac{1}{\alpha}\}$, and we define the following two constants  $    C^2  =  \frac{2187 PC_F}{32} \max_{p}\|\mZ^{(0)}(\mX_p)\|^2  \cdot\|\mW_U^{(0)} \|^2 (\|\mW_1^{(0)} \|^2 + \|\mW_2^{(0)} \|^2) + PC_F \max_{p}\|\mZ^{(0)}(\mX_p)\|^2 \big(\frac{2187}{16}\|\mW_1^{(0)}\|^2\|\mW_2^{(0)}\|^2 + 27 \big)  + \frac{PC_F}{d_{QK}} (\max_{p}\|\mX_p\|)^6 \|\mW_U^{(0)}\|^2 \\  \cdot\|\mW_Q^{(0)}\|^2  \|\mW_K^{(0)}\|^2\big(\frac{2187}{16} + \frac{177147}{256} \|\mW_1^{(0)}\|^2\|\mW_2^{(0)}\|^2\big)  + \frac{2187PC_F M}{4}\max_{i,p}\|\mX_p(i,:)\|_2^2(\max_p\|\mX_p\|)^4 \|\mW_U^{(0)}\|^2\|\mW_V^{(0)}\|^2\\ \cdot(\|\mW_K^{(0)}\|^2   + \|\mW_Q^{(0)}\|^2 )  + \frac{177147PC_F M}{64}\max_{i,p}\|\mX_p(i,:)\|_2^2(\max_p\|\mX_p\|)^4 \|\mW_1^{(0)}\|^2   \|\mW_2^{(0)}\|^2\\ \cdot\|\mW_U^{(0)}\|^2\|\mW_V^{(0)}\|^2 (\|\mW_K^{(0)}\|^2 + \|\mW_Q^{(0)}\|^2 )$
and $    C_F = \max \big\{
\frac{729\|\mW_2^{(0)}\|^2\|\mW_U^{(0)}\|^2\max_{p}\|\mZ^{(0)}(\mX_p)\|^2}{16},  \\   \big(\frac{729\|\mW_1^{(0)}\|^2\|\mW_2^{(0)}\|^2\|\mW_U^{(0)}\|^2}{16} + 9\max_{p}\|\mZ^{(0)}(\mX_p)\|^2 \big)\frac{243M\sqrt{P}\max_{p}\|\mX_p\|^3(\max_{i,p}\|\mX_p(i,:)\|_2\|\mX_p\|^2)\|\mW_V^{(0)}\|^2\|\mW_Q^{(0)}\|^2}{4d_{QK}}, \\   \big(\frac{729\|\mW_1^{(0)}\|^2\|\mW_2^{(0)}\|^2\|\mW_U^{(0)}\|^2}{16} + 9\max_{p}\|\mZ^{(0)}(\mX_p)\|^2 \big) \frac{243M\sqrt{P}\max_{p}\|\mX_p\|^3(\max_{i,p}\|\mX_p(i,:)\|_2\|\mX_p\|^2)\|\mW_V^{(0)}\|^2\|\mW_K^{(0)}\|^2}{4d_{QK}}, \\
27 M\sqrt{P}(\max_{p}\|\mX_p\|)^2\big( \max_{p}\|\mZ^{(0)}(\mX_p)\|^2 + \frac{81\|\mW_1^{(0)}\|^2\|\mW_2^{(0)}\|^2\|\mW_U^{(0)}\|^2}{16}  \big), 9\max_{p}\|\mZ^{(0)}(\mX_p)\|^2 \big\}$.

\end{theorem}

\begin{proof} We show by induction that, for every $t \geq 0$, the following holds:
\begin{eqnarray}
    \label{Summary of Proof requirements transformer}
    \begin{cases}
    \|\mW_{b,a}^{(t)}\|\leq \frac{3}{2}\|\mW_b^{(0)}\|, b = 1,2,U,V,Q,K, \\
    \sigma_{\min}(\mW_{b,a}^{(t)}) \geq \frac{\sigma_{\min}(\mW_b^{(0)})}{2}, b = 1,2,U,V,Q,K, \\
    \|\mZ_a^{(t)}(\mX_p)\|\leq \frac{3}{2}\|\mZ^{(0)}(\mX_p)\|, p=1,\dots, P,\\
    \sigma_{\min}(\mZ_a^{(t)}(\mX_p)) \geq \frac{\sigma_{\min}(\mZ^{(0)})(\mX_p)}{2}, p=1,\dots, P,  \\
    \sigma_{\min}(\phi_r(\mZ^{(t)}(\mX_p) \mW_1^{(t)})) \geq \frac{\sigma_{\min}(\phi_r(\mZ^{(0)}(\mX_p) \mW_1^{(0)}))}{2}, p=1,\dots, P,  \\
    \sigma_{\min}(\vphi_P^{(t)}) \geq \frac{\sigma_{\min}(\vphi_P^{(0)})}{2}, \\
    \Phi(\vtheta^{(t+1)})\leq (1 - \mu \alpha)\Phi(\vtheta^{(t)}).
    \end{cases}
\end{eqnarray}
where $\mW_{b,a}^{(t)} = \mW_b^{(t)} + a(\mW_b^{(t+1)} - \mW_b^{(t)})$ and $\mZ_a^{(t)}(\mX_p) = \mZ^{(t)}(\mX_p) + a(\mZ^{(t+1)}(\mX_p) - \mZ^{(t)}(\mX_p))$, $a\in[0,1]$.

\paragraph{Step I: Proof of first six inequalities in \eqref{Summary of Proof requirements transformer}. }

\textbf{Part I} We begin by bounding $\|\mW_{1,a}^{(t)}\|$ and $\sigma_{\min}(\mW_{1,a}^{(t)})$ where $\mW_{1,a}^{(t)} = \mW_1^{(t)} + a(\mW_1^{(t+1)} - \mW_1^{(t)})$ with $a\in[0,1]$. Specifically, we first expand $\|\mW_{1,a}^{(t)} - \mW_1^{(0)} \|_F$ as follows:
\begin{eqnarray}
    \label{upper bound W1 transformer}
    &\!\!\!\!\!\!\!\!&\|\mW_{1,a}^{(t)} - \mW_1^{(0)} \|_F\nonumber\\
    &\!\!\!\!\leq\!\!\!\!& \mu \sum_{s=0}^{t}\|\nabla_{\mW_1} L(\mW_1^{(s)}, \mW_2^{(s)},\mW_Q^{(s)}, \mW_K^{(s)}, \mW_V^{(s)}, \mW_U^{(s)}) \|_F\nonumber\\
    &\!\!\!\!\leq\!\!\!\!& \mu \sum_{s=0}^{t}\sum_{p=1}^{P} \|\mZ^{(s)}(\mX_p)\| \|( \phi_r'(\mZ^{(s)}(\mX_p)\mW_1^{(s)}) \odot ((F^{(s)}_{\mTheta}(\mX_p) - \mY_p){\mW_U^{(s)}}^\top{\mW_2^{(s)}}^\top))\|_F\nonumber\\
    &\!\!\!\!\leq\!\!\!\!& \mu \sum_{s=0}^{t}\sum_{p=1}^{P} \|\mZ^{(s)}(\mX_p)\| \|\mW_2^{(s)}\| \|\mW_U^{(s)}\| \|F^{(s)}_{\mTheta}(\mX_p) - \mY\|_F\nonumber\\
    &\!\!\!\!\leq\!\!\!\!& \frac{27\sqrt{2}\mu\sqrt{P}}{8}\max_p\|\mZ^{(0)}(\mX_p)\| \|\mW_2^{(0)}\| \|\mW_U^{(0)}\| \sum_{s=0}^{t}(1 - \mu \alpha)^{\frac{s}{2}} \Phi^{\frac{1}{2}}(\vtheta^{(0)})\nonumber\\
    &\!\!\!\! = \!\!\!\!&\frac{27\sqrt{2}\mu\sqrt{P}}{8}\max_p\|\mZ^{(0)}(\mX_p)\| \|\mW_2^{(0)}\| \|\mW_U^{(0)}\| \Phi^{\frac{1}{2}}(\vtheta^{(0)}) \frac{1 - (1 - \mu \alpha)^{\frac{t+1}{2}}}{1 - (1 - \mu \alpha)^{\frac{1}{2}}}\nonumber\\
    &\!\!\!\!\leq\!\!\!\!& \frac{27\sqrt{2}\sqrt{P}}{8\alpha (1 - (1 - \mu \alpha)^{\frac{1}{2}})}\max_p\|\mZ^{(0)}(\mX_p)\| \|\mW_2^{(0)}\| \|\mW_U^{(0)}\| \Phi^{\frac{1}{2}}(\vtheta^{(0)}).
\end{eqnarray}
where $\mu \leq \frac{1}{\alpha}$. To further guarantee $\|\mW_{1,a}^{(t)} - \mW_1^{(0)} \|_F \leq \frac{\sigma_{\min}(\mW_1^{(0)})}{2} $, the initialization requirement should satisfy
\begin{eqnarray}
    \label{requirement of initialization1 transformer}
    \Phi^{\frac{1}{2}}(\vtheta^{(0)}) \leq \frac{2\sqrt{2}\alpha (1 - (1 - \mu \alpha)^{\frac{1}{2}})}{27\sqrt{P}}\frac{\sigma_{\min}(\mW_1^{(0)})}{\max_p\|\mZ^{(0)}(\mX_p)\| \|\mW_2^{(0)}\| \|\mW_U^{(0)}\|}.
\end{eqnarray}
Based on $\sigma_{\min}(\mW_1^{(0)}) - \sigma_{\min}(\mW_{1,a}^{(t)}) \leq \|\mW_{1,a}^{(t)} - \mW_1^{(0)} \| \leq  \frac{\sigma_{\min}(\mW_1^{(0)})}{2}$, we can obtain
\begin{eqnarray}
    \label{bound of W1 transformer}
    \frac{\sigma_{\min}(\mW_1^{(0)})}{2} \leq \sigma_{\min}(\mW_{1,a}^{(t)}) \leq \|\mW_{1,a}^{(t)}\| \leq \frac{3\|\mW_1^{(0)}\|}{2}.
\end{eqnarray}

Similarly, we respectively expand other terms $\|\mW_{b,a}^{(t)} - \mW_b^{(0)} \|_F, b = 2,U,V,Q,K,$ as follows:
\begin{eqnarray}
    \label{upper bound W2 transformer}
    &\!\!\!\!\!\!\!\!&\|\mW_{2,a}^{(t)} - \mW_2^{(0)} \|_F\nonumber\\
    &\!\!\!\!\leq\!\!\!\!&  \mu \sum_{s=0}^{t}\sum_{p=1}^{P} \|(\mZ^{(s)}(\mX_p)\mW_1^{(s)})^\top(F^{(s)}_{\mTheta}(\mX_p) - \mY_p){\mW_U^{(s)}}^\top\|_F\nonumber\\
    &\!\!\!\!\leq\!\!\!\!& \mu \sum_{s=0}^{t}\sum_{p=1}^{P} \|\mZ^{(s)}(\mX_p)\| \|\mW_1^{(s)}\| \|\mW_U^{(s)}\| \|F^{(s)}_{\mTheta}(\mX_p) - \mY_p\|_F\nonumber\\
    &\!\!\!\!\leq\!\!\!\!& \frac{27\sqrt{2}\sqrt{P}}{8\alpha (1 - (1 - \mu \alpha)^{\frac{1}{2}})}\max_{p}\|\mZ^{(0)}(\mX_p)\| \|\mW_1^{(0)}\| \|\mW_U^{(0)}\| \Phi^{\frac{1}{2}}(\vtheta^{(0)}),\\
    \label{upper bound WU transformer}
    &\!\!\!\!\!\!\!\!&\|\mW_{U,a}^{(t)} - \mW_U^{(0)} \|_F\nonumber\\
    &\!\!\!\!\leq\!\!\!\!&  \mu \sum_{s=0}^{t}\sum_{p=1}^{P} \|(\phi_r(\mZ^{(s)}(\mX_p)\mW_1^{(s)})\mW_2^{(s)})^\top(F^{(s)}_{\mTheta}(\mX_p) - \mY_p)\|_F +\|(\mZ^{(s)}(\mX_p))^\top(F^{(s)}_{\mTheta}(\mX_p) - \mY_p)\|_F\nonumber\\
    &\!\!\!\!\leq\!\!\!\!& \mu \sum_{s=0}^{t}\sum_{p=1}^{P} \|\mW_2^{(s)}\|\|(\mZ^{(s)}(\mX_p)\mW_1^{(s)})^\top(F^{(s)}_{\mTheta}(\mX_p) - \mY_p)\|_F +\|\mZ^{(s)}(\mX_p)\|\|F^{(s)}_{\mTheta}(\mX_p) - \mY_p\|_F\nonumber\\
    &\!\!\!\!\leq\!\!\!\!& \mu \sum_{s=0}^{t}\sum_{p=1}^{P} \|\mW_2^{(s)}\| \|\mZ^{(s)}(\mX_p)\| \|\mW_1^{(s)}\|\|F^{(s)}_{\mTheta}(\mX_p) - \mY_p\|_F +\|\mZ^{(s)}(\mX_p)\|\|F^{(s)}_{\mTheta}(\mX_p) - \mY_p\|_F\nonumber\\
    &\!\!\!\!\leq\!\!\!\!& \frac{27\sqrt{2}\sqrt{P}}{8\alpha (1 - (1 - \mu \alpha)^{\frac{1}{2}})}(1 + \|\mW_1^{(0)}\|\|\mW_2^{(0)}\| )\max_{p}\|\mZ^{(0)}(\mX_p)\| \Phi^{\frac{1}{2}}(\vtheta^{(0)}),
        \end{eqnarray}
    \begin{eqnarray}
    \label{upper bound WV transformer}
    &\!\!\!\!\!\!\!\!&\|\mW_{V,a}^{(t)} - \mW_V^{(0)} \|_F\nonumber\\
    &\!\!\!\!\leq\!\!\!\!&  \mu \sum_{s=0}^{t} \sum_{p=1}^{P}\|\mX_p^\top\phi_s\bigg(\frac{\mX_p\mW_Q^{(s)}{\mW_K^{(s)}}^\top \mX_p^\top}{\sqrt{d_{QK}}}\bigg)^\top \bigg( \phi_r'(\mZ^{(s)}(\mX_p)\mW_1^{(s)}) \odot \bigg((F^{(s)}_{\mTheta}(\mX_p)- \mY_p){\mW_U^{(s)}}^\top{\mW_2^{(s)}}^\top\bigg)\bigg){\mW_1^{(s)}}^\top\|_F\nonumber\\
    &\!\!\!\!\!\!\!\!&   + \|\mX_p^\top\phi_s\bigg(\frac{\mX_p\mW_Q^{(s)}{\mW_K^{(s)}}^\top \mX_p^\top}{\sqrt{d_{QK}}}\bigg)^\top(F^{(s)}_{\mTheta}(\mX_p) - \mY_p){\mW_U^{(s)}}^\top\|_F\nonumber\\
    &\!\!\!\!\leq\!\!\!\!& \mu \sum_{s=0}^{t} \sum_{p=1}^{P}\sqrt{M}\|\mX_p\|\|\mW_1^{(s)} \| \|(F^{(s)}_{\mTheta}(\mX_p) - \mY_p){\mW_U^{(s)}}^\top{\mW_2^{(s)}}^\top\|_F  +  \sqrt{M}\|\mX_p\|\|(F^{(s)}_{\mTheta}(\mX_p) - \mY_p){\mW_U^{(s)}}^\top\|_F \nonumber\\
    &\!\!\!\!\leq\!\!\!\!& \mu\sqrt{M} \sum_{s=0}^{t}\sum_{p=1}^{P}\|\mX_p\|\|\mW_1^{(s)} \| \|\mW_2^{(s)}\| \|\mW_U^{(s)}\| \|F^{(s)}_{\mTheta}(\mX_p) - \mY_p\|_F + \|\mX_p\|\|\mW_U^{(s)}\|\|F^{(s)}_{\mTheta}(\mX_p) - \mY_p\|_F \nonumber\\
    &\!\!\!\!\leq\!\!\!\!& \frac{27\sqrt{2}\sqrt{MP}\max_{p}\|\mX_p\|}{8\alpha (1 - (1 - \mu \alpha)^{\frac{1}{2}})}(1 + \|\mW_1^{(0)} \| \|\mW_2^{(0)}\|) \|\mW_U^{(0)}\| \Phi^{\frac{1}{2}}(\vtheta^{(0)}),
\end{eqnarray}
where the second inequality follows \Cref{property of softmax function and its derivative}.

\begin{eqnarray}
    \label{upper bound WQ transformer}
    &\!\!\!\!\!\!\!\!&\|\mW_{Q,a}^{(t)} - \mW_Q^{(0)} \|_F\nonumber\\
    &\!\!\!\!\leq\!\!\!\!&  \mu \sum_{s=0}^{t}\sum_{p=1}^{P} \sum_{i=1}^{M}\bigg(2\|\mX_p(i,:)\|_2\|F^{(s)}_{\mTheta}(\mX_p)(i,:) - \mY_p(i,:)\|_2\|\mW_U^{(s)}\| \|\mW_V^{(s)}\|\|\mX_p\|^2\|\mW_K^{(s)}\|\nonumber\\
    &\!\!\!\!\!\!\!\!& + 2\|\mX_p(i,:)\|_2\|F^{(s)}_{\mTheta}(\mX_p)(i,:) - \mY_p(i,:)\|_2 \|\mW_U^{(s)}\| \|\mW_2^{(s)}\| \|\mW_1^{(s)}\| \|\mW_V^{(s)}\|\|\mX_p\|^2\|\mW_K^{(s)}\|\bigg)\nonumber\\
    &\!\!\!\!\leq\!\!\!\!& \mu \sum_{s=0}^{t}\sum_{p=1}^{P} \frac{243\sqrt{M}}{16} \max_{i}\|\mX_p(i,:)\|_2\|\mX_p\|^2 (1 + \|\mW_1^{(0)}\|\|\mW_2^{(0)}\|)\|\mW_U^{(0)}\|\|\mW_V^{(0)}\|\|\mW_K^{(0)}\|\|F^{(s)}_{\mTheta}(\mX_p) - \mY_p\|_F\nonumber\\
    &\!\!\!\!\leq\!\!\!\!& \frac{243\sqrt{2MP}\max_{i,p}\|\mX_p(i,:)\|_2\|\mX_p\|^2}{16\alpha (1 - (1 - \mu \alpha)^{\frac{1}{2}})}(1 + \|\mW_1^{(0)}\|\|\mW_2^{(0)}\|)\|\mW_U^{(0)}\|\|\mW_V^{(0)}\|\|\mW_K^{(0)}\|\Phi^{\frac{1}{2}}(\vtheta^{(0)}),\\
    \label{upper bound WK transformer}
    &\!\!\!\!\!\!\!\!&\|\mW_{K,a}^{(t)} - \mW_K^{(0)} \|_F\nonumber\\
    &\!\!\!\!\leq\!\!\!\!&  \mu \sum_{s=0}^{t}\sum_{p=1}^{P} \sum_{i=1}^{M}\bigg( 2\|\mX_p(i,:)\|_2\|F^{(s)}_{\mTheta}(\mX_p)(i,:) - \mY_p(i,:)\|_2\|\mW_U^{(s)}\| \|\mW_V^{(s)}\|\|\mX_p\|^2\|\mW_Q^{(s)}\|\nonumber\\
    &\!\!\!\!\!\!\!\!& + 2\|\mX_p(i,:)\|_2\|F^{(s)}_{\mTheta}(\mX_p)(i,:) - \mY_p(i,:)\|_2 \|\mW_U^{(s)}\| \|\mW_2^{(s)}\| \|\mW_1^{(s)}\| \|\mW_V^{(s)}\|\|\mX_p\|^2\|\mW_Q^{(s)}\|\bigg)\nonumber\\
    &\!\!\!\!\leq\!\!\!\!& \mu \sum_{s=0}^{t}\sum_{p=1}^{P} \frac{243\sqrt{M}}{16} \max_{i}\|\mX_p(i,:)\|_2\|\mX_p\|^2 (1 + \|\mW_1^{(0)}\|\|\mW_2^{(0)}\|)\|\mW_U^{(0)}\|\|\mW_V^{(0)}\|\|\mW_Q^{(0)}\|\|F^{(s)}_{\mTheta}(\mX_p) - \mY_p\|_F\nonumber\\
    &\!\!\!\!\leq\!\!\!\!& \frac{243\sqrt{2MP}\max_{i,p}\|\mX_p(i,:)\|_2\|\mX_p\|^2}{16\alpha (1 - (1 - \mu \alpha)^{\frac{1}{2}})}(1 + \|\mW_1^{(0)}\|\|\mW_2^{(0)}\|)\|\mW_U^{(0)}\|\|\mW_V^{(0)}\|\|\mW_Q^{(0)}\|\Phi^{\frac{1}{2}}(\vtheta^{(0)}).
\end{eqnarray}

When the initialization requirement satisfies
\begin{eqnarray}
    \label{the requirement of initialization}
    \Phi^{\frac{1}{2}}(\vtheta^{(0)})&\!\!\!\!\leq \!\!\!\!& \min\Bigg\{\frac{2\sqrt{2}\alpha (1 - (1 - \mu \alpha)^{\frac{1}{2}})}{27\sqrt{P}}\frac{\sigma_{\min}(\mW_2^{(0)})}{\max_{p}\|\mZ^{(0)}(\mX_p)\| \|\mW_1^{(0)}\| \|\mW_U^{(0)}\|},\nonumber\\
    &\!\!\!\!\!\!\!\!& \frac{2\sqrt{2}\alpha (1 - (1 - \mu \alpha)^{\frac{1}{2}})}{27\sqrt{P}(1 + \|\mW_1^{(0)} \| \|\mW_2^{(0)}\|)}\cdot\min\bigg\{ \frac{\sigma_{\min}(\mW_U^{(0)})}{\max_{p}\|\mZ^{(0)}(\mX_p)\|}, \frac{\sigma_{\min}(\mW_V^{(0)})}{ \sqrt{M}\max_{p}\|\mX_p\|\|\mW_U^{(0)}\|}\bigg\},\nonumber\\
    &\!\!\!\!\!\!\!\!& \frac{4\sqrt{2}\alpha (1 - (1 - \mu \alpha)^{\frac{1}{2}})}{243\sqrt{MP}\max_{i,p}\|\mX_p(i,:)\|_2\|\mX_p\|^2(1 + \|\mW_1^{(0)}\|\|\mW_2^{(0)}\|)\|\mW_U^{(0)}\|\|\mW_V^{(0)}\|}  \min\bigg\{ \frac{\sigma_{\min}(\mW_Q^{(0)})}{\|\mW_K^{(0)}\|}, \frac{\sigma_{\min}(\mW_K^{(0)})}{\|\|\mW_Q^{(0)}\|}\bigg\}\Bigg\} ,\nonumber
\end{eqnarray}
we can guarantee
\begin{eqnarray}
    \label{bound of W2,U,V,Q,K transformer}
    \frac{\sigma_{\min}(\mW_b^{(0)})}{2} \leq \sigma_{\min}(\mW_{b,a}^{(t)}) \leq \|\mW_{b,a}^{(t)}\| \leq \frac{3\|\mW_b^{(0)}\|}{2}, b=2,U,V,Q,K.
\end{eqnarray}

\textbf{Part II} Next, we will prove $\frac{\sigma_{\min}(\mZ^{(0)}(\mX_p))}{2} \leq \sigma_{\min}(\mZ_a^{(t)}(\mX_p)) \leq \|\mZ_a^{(t)}(\mX_p)\| \leq \frac{3\|\mZ^{(0)}(\mX_p)\|}{2}$. Specifically, we first expand
\begin{eqnarray}
    \label{expansion of Z}
    &\!\!\!\!\!\!\!\!&\|\mZ_a^{(t)}(\mX_p) - \mZ^{(0)}(\mX_p)\|_F\nonumber\\
    &\!\!\!\! = \!\!\!\!& \| \phi_s(\frac{\mX_p\mW_{Q,a}^{(t)}{\mW_{K,a}^{(t)}}^\top \mX_p^\top}{\sqrt{d_{QK}}})\mX_p\mW_{V,a}^{(t)} - \phi_s(\frac{\mX_p\mW_Q^{(0)}{\mW_K^{(0)}}^\top \mX_p^\top}{\sqrt{d_{QK}}})\mX_p\mW_V^{(0)}\|_F\nonumber\\
    &\!\!\!\! \leq \!\!\!\!& \|\phi_s(\frac{\mX_p\mW_{Q,a}^{(t)}{\mW_{K,a}^{(t)}}^\top \mX_p^\top}{\sqrt{d_{QK}}}) - \phi_s(\frac{\mX_p\mW_Q^{(0)}{\mW_K^{(0)}}^\top \mX_p^\top}{\sqrt{d_{QK}}}) \|_F\|\mX_p\| \|\mW_{V,a}^{(t)}\|\nonumber\\
    &\!\!\!\!\!\!\!\!& + \|\phi_s(\frac{\mX_p\mW_Q^{(0)}{\mW_K^{(0)}}^\top \mX_p^\top}{\sqrt{d_{QK}}})\|_F \|\mX_p\| \|\mW_{V,a}^{(t)} - \mW_V^{(0)} \|_F\nonumber\\
    &\!\!\!\! \leq \!\!\!\!& \frac{3\sqrt{M}\|\mX_p\|^3\|\mW_V^{(0)}\|}{\sqrt{d_{QK}}}\|\mW_{Q,a}^{(t)}{\mW_{K,a}^{(t)}}^\top - \mW_Q^{(0)}{\mW_K^{(0)}}^\top \|_F +\sqrt{M}\|\mX_p\|\|\mW_{V,a}^{(t)} - \mW_V^{(0)} \|_F\nonumber\\
    &\!\!\!\! \leq \!\!\!\!& \sqrt{M}\|\mX_p\|\|\mW_{V,a}^{(t)} - \mW_V^{(0)} \|_F + \frac{9\sqrt{M}\|\mX_p\|^3\|\mW_V^{(0)}\|\|\mW_Q^{(0)}\|}{2\sqrt{d_{QK}}}\|\mW_{K,a}^{(t)} - \mW_K^{(0)} \|_F\nonumber\\
    &\!\!\!\!\!\!\!\!& + \frac{9\sqrt{M}\|\mX_p\|^3\|\mW_V^{(0)}\|\|\mW_K^{(0)}\|}{2\sqrt{d_{QK}}}\|\mW_{Q,a}^{(t)} - \mW_Q^{(0)} \|_F\nonumber\\
    &\!\!\!\! \leq \!\!\!\!& \frac{2187\sqrt{2}M\sqrt{P}\max_{p}\|\mX_p\|^3(\max_{i,p}\|\mX_p(i,:)\|_2\|\mX_p\|^2)\|\mW_U^{(0)}\|\|\mW_V^{(0)}\|^2(\|\mW_Q^{(0)}\|^2 + \|\mW_K^{(0)}\|^2 )}{32\sqrt{d_{QK}}\alpha (1 - (1 - \mu \alpha)^{\frac{1}{2}})}(1\nonumber\\
    &\!\!\!\!\!\!\!\!& + \|\mW_1^{(0)}\|\|\mW_2^{(0)}\|)\Phi^{\frac{1}{2}}(\vtheta^{(0)}) +\frac{27\sqrt{2}M\sqrt{P}(\max_{p}\|\mX_p\|)^2\|\mW_U^{(0)}\|}{8\alpha (1 - (1 - \mu \alpha)^{\frac{1}{2}})}(1 + \|\mW_1^{(0)} \| \|\mW_2^{(0)}\|) \Phi^{\frac{1}{2}}(\vtheta^{(0)})\nonumber\\
    &\!\!\!\! := \!\!\!\!& C_1 \Phi^{\frac{1}{2}}(\vtheta^{(0)}),
\end{eqnarray}
where the second inequality follows \Cref{property of softmax function and its derivative} and the fourth inequality uses \eqref{upper bound WV transformer}, \eqref{upper bound WQ transformer} and \eqref{upper bound WK transformer}.

Similar with the analysis of \eqref{bound of W1 transformer}, when the initialization satisfies $\Phi^{\frac{1}{2}}(\vtheta^{(0)}) \leq \frac{\min_{p}\sigma_{\min}(\mZ^{(0)}(\mX_p))}{2C_1}$, we can derive
\begin{eqnarray}
    \label{bound of Z transformer}
    \frac{\sigma_{\min}(\mZ^{(0)}(\mX_p))}{2} \leq \sigma_{\min}(\mZ_a^{(t)}(\mX_p)) \leq \|\mZ_a^{(t)}(\mX_p)\| \leq \frac{3\|\mZ^{(0)}(\mX_p)\|}{2},
\end{eqnarray}
where $a \in[0,1]$.

\textbf{Part III} Finally, we derive a lower bound for $\sigma_{\min}(\phi_r(\mZ^{(t)}(\mX_p) \mW_1^{(t)}))$. Specifically, we have
\begin{eqnarray}
    \label{expansion of sigmar for ZW1}
    &\!\!\!\!\!\!\!\!&\|\phi_r(\mZ^{(t)}(\mX_p) \mW_1^{(t)}) - \phi_r(\mZ^{(0)}(\mX_p) \mW_1^{(0)}) \|_F\nonumber\\
    &\!\!\!\!\leq \!\!\!\!&\|\phi_r(\mZ^{(t)}(\mX_p) \mW_1^{(t)}) - \phi_r(\mZ^{(0)}(\mX_p) \mW_1^{(t)}) \|_F + \|\phi_r(\mZ^{(0)}(\mX_p) \mW_1^{(t)}) - \phi_r(\mZ^{(0)}(\mX_p) \mW_1^{(0)}) \|_F\nonumber\\
    &\!\!\!\!\leq \!\!\!\!&\|\mZ^{(t)}(\mX_p) \mW_1^{(t)} - \mZ^{(0)}(\mX_p) \mW_1^{(t)} \|_F + \|\mZ^{(0)}(\mX_p) \mW_1^{(t)} - \mZ^{(0)}(\mX_p) \mW_1^{(0)} \|_F\nonumber\\
    &\!\!\!\!\leq \!\!\!\!&\frac{3\|\mW_1^{(0)}\|}{2}\|\mZ^{(t)}(\mX_p)  - \mZ^{(0)}(\mX_p)  \|_F + \|\mZ^{(0)}(\mX_p)\|\|\mW_1^{(t)} -  \mW_1^{(0)} \|_F\nonumber\\
    &\!\!\!\! \leq \!\!\!\!&  \frac{3C_1\|\mW_1^{(0)}\|}{2}\Phi^{\frac{1}{2}}(\vtheta^{(0)}) + \frac{27\sqrt{2P}}{8\alpha (1 - (1 - \mu \alpha)^{\frac{1}{2}})}\max_{p}\|\mZ^{(0)}(\mX_p)\|^2 \|\mW_2^{(0)}\| \|\mW_U^{(0)}\| \Phi^{\frac{1}{2}}(\vtheta^{(0)}),
\end{eqnarray}
where the last line uses \eqref{upper bound W1 transformer} and \eqref{expansion of Z}. When $$\Phi^{\frac{1}{2}}(\vtheta^{(0)}) \leq \frac{\min_{p}\sigma_{\min}(\phi_r(\mZ^{(0)}(\mX_p) \mW_1^{(0)}))}{3C_1\|\mW_1^{(0)}\| + \frac{27\sqrt{2P}}{4\alpha (1 - (1 - \mu \alpha)^{\frac{1}{2}})}\max_{p}\|\mZ^{(0)}(\mX_p)\|^2 \|\mW_2^{(0)}\| \|\mW_U^{(0)}\|}$$ is satisfied, following the analysis of \eqref{bound of W1 transformer}, we have
\begin{eqnarray}
    \label{lower bound for sigmar for ZW1}
    \sigma_{\min}(\phi_r(\mZ^{(t)}(\mX_p) \mW_1^{(t)})) \geq \frac{\sigma_{\min}(\phi_r(\mZ^{(0)}(\mX_p) \mW_1^{(0)}))}{2}.
\end{eqnarray}

Similarly, we can derive the lower bound for $\sigma_{\min}(\vphi_P^{(t)})$. Since $\|\vphi_P^{(t)} - \vphi_P^{(0)}  \|_F\leq \sum_{p=1}^{P}\|\phi_r(\mZ^{(t)}(\mX_p) \mW_1^{(t)}) - \phi_r(\mZ^{(0)}(\mX_p) \mW_1^{(0)})\|_F$, when $$\Phi^{\frac{1}{2}}(\vtheta^{(0)}) \leq \frac{\sigma_{\min}(\vphi_P^{(0)})}{3C_1P\|\mW_1^{(0)}\| + \frac{27\sqrt{2}P^{\frac{3}{2}}}{4\alpha (1 - (1 - \mu \alpha)^{\frac{1}{2}})}\max_{p}\|\mZ^{(0)}(\mX_p)\|^2 \|\mW_2^{(0)}\| \|\mW_U^{(0)}\|}$$ is met, we have $\sigma_{\min}(\vphi_P^{(t)}) \geq \frac{\sigma_{\min}(\vphi_P^{(0)})}{2}$.

\paragraph{Step II: Establishing the Lipschitz continuity of the gradient.} Before proving $\Phi(\vtheta^{(t+1)})\leq (1 - \mu \alpha)\Phi(\vtheta^{(t)})$, it is necessary to first establish the Lipschitz continuity of the gradient, i.e., $\|\Phi(\vtheta^{(t)}_a ) - \Phi(\vtheta^{(t)}) \|_2\leq C \|\vtheta^{(t)}_a  - \vtheta^{(t)}\|_2$ with $\vtheta^{(t)}_a = \vtheta^{(t)} + a(\vtheta^{(t+1)} - \vtheta^{(t)})$ and $a\in[0,1]$.
To proceed, we recall the result in \eqref{expansion of Z}, which states that
\begin{eqnarray}
    \label{expansion of Z at time t}
    &\!\!\!\!\!\!\!\!&\|\mZ_a^{(t)}(\mX_p) - \mZ^{(t)}(\mX_p)\|_F^2\nonumber\\
    &\!\!\!\! \leq \!\!\!\!& \frac{243M\sqrt{P}\|\mX_p\|^3(\max_{i,p}\|\mX_p(i,:)\|_2\|\mX_p\|^2)\|\mW_V^{(0)}\|^2\|\mW_Q^{(0)}\|^2}{4d_{QK}}\|\mW_{K,a}^{(t)} - \mW_K^{(t)} \|_F^2\nonumber\\
    &\!\!\!\!\!\!\!\!& + \frac{243M\sqrt{P}\|\mX_p\|^3(\max_{i,p}\|\mX_p(i,:)\|_2\|\mX_p\|^2)\|\mW_V^{(0)}\|^2\|\mW_K^{(0)}\|^2}{4d_{QK}}\|\mW_{Q,a}^{(t)} - \mW_Q^{(t)} \|_F^2\nonumber\\
    &\!\!\!\!\!\!\!\!& +3 M\sqrt{P}(\max_{p}\|\mX_p\|)^2\|\mW_{V,a}^{(t)} - \mW_V^{(t)} \|_F^2,
\end{eqnarray}
and further derive
\begin{eqnarray}
    \label{Lipschitz constant of F and Fa in the transofrmer}
    &\!\!\!\!\!\!\!\!&\|F_{\mTheta, a}^{(t)}(\mX_p) - F_{\mTheta}^{(t)}(\mX_p) \|_F^2\nonumber\\
    &\!\!\!\! = \!\!\!\!&\|(\phi_r(\mZ_a^{(t)}(\mX_p)\mW_{1,a}^{(t)})\mW_{2,a}^{(t)} +\mZ_a^{(t)}(\mX_p)) \mW_{U,a}^{(t)} - (\phi_r(\mZ^{(t)}(\mX_p)\mW_{1}^{(t)})\mW_{2}^{(t)} +\mZ^{(t)}(\mX_p)) \mW_{U}^{(t)}\|_F^2\nonumber\\
    &\!\!\!\! \leq \!\!\!\!& 2\|\phi_r(\mZ_a^{(t)}(\mX_p)\mW_{1,a}^{(t)})\mW_{2,a}^{(t)}\mW_{U,a}^{(t)} - \phi_r(\mZ^{(t)}(\mX_p)\mW_{1}^{(t)})\mW_{2}^{(t)}\mW_{U}^{(t)} \|_F^2\nonumber\\
    &\!\!\!\!\!\!\!\!&+ 2\|\mZ_a^{(t)}(\mX_p) \mW_{U,a}^{(t)} - \mZ^{(t)}(\mX_p) \mW_{U}^{(t)}\|_F^2\nonumber\\
    &\!\!\!\! \leq \!\!\!\!& \frac{81\|\mW_2^{(0)}\|^2\|\mW_U^{(0)}\|^2}{4} (\|\mZ_a^{(t)}(\mX_p)\mW_{1,a}^{(t)} - \mZ_a^{(t)}(\mX_p)\mW_{1}^{(t)} \|_F^2  + \|\mZ_a^{(t)}(\mX_p)\mW_{1}^{(t)} - \mZ^{(t)}(\mX_p)\mW_{1}^{(t)}\|_F^2)\nonumber\\
    &\!\!\!\!\!\!\!\!& + 4 (\|\mZ_a^{(t)}(\mX_p) \mW_{U,a}^{(t)} - \mZ_a^{(t)}(\mX_p) \mW_{U}^{(t)} \|_F^2 + \|\mZ_a^{(t)}(\mX_p) \mW_{U}^{(t)} - \mZ^{(t)}(\mX_p) \mW_{U}^{(t)}\|_F^2)\nonumber\\
    &\!\!\!\! \leq \!\!\!\!& \frac{729\|\mW_2^{(0)}\|^2\|\mW_U^{(0)}\|^2}{16}(\|\mZ^{(0)}(\mX_p)\|^2\|\mW_{1,a}^{(t)} - \mW_{1}^{(t)} \|_F^2 + \|\mW_1^{(0)}\|^2 \|\mZ_a^{(t)}(\mX_p) - \mZ^{(t)}(\mX_p) \|_F^2)\nonumber\\
    &\!\!\!\!\!\!\!\!& + 9\|\mZ^{(0)}(\mX_p)\|^2\|\mW_{U,a}^{(t)} - \mW_{U}^{(t)}\|_F^2 + 9\|\mW_U^{(0)}\|^2 \|\mZ_a^{(t)}(\mX_p) - \mZ^{(t)}(\mX_p) \|_F^2\nonumber\\
    &\!\!\!\! \leq \!\!\!\!& \frac{729\|\mW_2^{(0)}\|^2\|\mW_U^{(0)}\|^2\|\mZ^{(0)}(\mX_p)\|^2}{16}\|\mW_{1,a}^{(t)} - \mW_{1}^{(t)} \|_F^2 + 9\|\mZ^{(0)}(\mX_p)\|^2\|\mW_{U,a}^{(t)} - \mW_{U}^{(t)}\|_F^2\nonumber\\
    &\!\!\!\!\!\!\!\!& + \bigg(\frac{729\|\mW_1^{(0)}\|^2\|\mW_2^{(0)}\|^2\|\mW_U^{(0)}\|^2}{16} + 9\|\mZ^{(0)}(\mX_p)\|^2 \bigg)\nonumber\\
    &\!\!\!\!\!\!\!\!&\cdot\bigg(\frac{243M\sqrt{P}\|\mX_p\|^3(\max_{i,p}\|\mX_p(i,:)\|_2\|\mX_p\|^2)\|\mW_V^{(0)}\|^2\|\mW_Q^{(0)}\|^2}{4d_{QK}}\|\mW_{K,a}^{(t)} - \mW_K^{(t)} \|_F^2\nonumber\\
    &\!\!\!\!\!\!\!\!&+ \frac{243M\sqrt{P}\|\mX_p\|^3(\max_{i,p}\|\mX_p(i,:)\|_2\|\mX_p\|^2)\|\mW_V^{(0)}\|^2\|\mW_K^{(0)}\|^2}{4d_{QK}}\|\mW_{Q,a}^{(t)} - \mW_Q^{(t)} \|_F^2\nonumber\\
    &\!\!\!\!\!\!\!\!& +3 M\sqrt{P}(\max_{p}\|\mX_p\|)^2\|\mW_{V,a}^{(t)} - \mW_V^{(t)} \|_F^2\bigg)\nonumber\\
    &\!\!\!\! \leq \!\!\!\!& C_F \|\vtheta^{(t)}_a  - \vtheta^{(t)}\|_2^2,
\end{eqnarray}
where we define $    C_F = \max \big\{
\frac{729\|\mW_2^{(0)}\|^2\|\mW_U^{(0)}\|^2\max_{p}\|\mZ^{(0)}(\mX_p)\|^2}{16},  9\max_{p}\|\mZ^{(0)}(\mX_p)\|^2, \\   \big(\frac{729\|\mW_1^{(0)}\|^2\|\mW_2^{(0)}\|^2\|\mW_U^{(0)}\|^2}{16} + 9\max_{p}\|\mZ^{(0)}(\mX_p)\|^2 \big)\frac{243M\sqrt{P}\max_{p}\|\mX_p\|^3(\max_{i,p}\|\mX_p(i,:)\|_2\|\mX_p\|^2)\|\mW_V^{(0)}\|^2\|\mW_Q^{(0)}\|^2}{4d_{QK}}, \\   \big(\frac{729\|\mW_1^{(0)}\|^2\|\mW_2^{(0)}\|^2\|\mW_U^{(0)}\|^2}{16} + 9\max_{p}\|\mZ^{(0)}(\mX_p)\|^2 \big) \frac{243M\sqrt{P}\max_{p}\|\mX_p\|^3(\max_{i,p}\|\mX_p(i,:)\|_2\|\mX_p\|^2)\|\mW_V^{(0)}\|^2\|\mW_K^{(0)}\|^2}{4d_{QK}}, \\
27 M\sqrt{P}(\max_{p}\|\mX_p\|)^2\big( \max_{p}\|\mZ^{(0)}(\mX_p)\|^2 + \frac{81\|\mW_1^{(0)}\|^2\|\mW_2^{(0)}\|^2\|\mW_U^{(0)}\|^2}{16}  \big) \big\}$.

In addition, since it has been previously established that $\|\mW_{b,a}^{(t)} - \mW_b^{(t)} \|_F^2\leq \frac{9\sigma_{\min}^2(\mW_b^{(t)})}{4}, b=1,2,U,V,Q,K$, we can further obtain
\begin{eqnarray}
    \label{tigher bound of F difference}
    &\!\!\!\!\!\!\!\!&\|F_{\mTheta, a}^{(t)}(\mX_p) - F_{\mTheta}^{(t)}(\mX_p) \|_F^2\nonumber\\
    &\!\!\!\! \leq \!\!\!\!& \frac{6561\|\mW_2^{(0)}\|^2\|\mW_U^{(0)}\|^2\max_{p}\|\mZ^{(0)}(\mX_p)\|^2\sigma_{\min}^2(\mW_1^{(0)})}{64} + \frac{81\max_{p}\|\mZ^{(0)}(\mX_p)\|^2\sigma_{\min}^2(\mW_U^{(0)})}{4} \nonumber\\
    &\!\!\!\!\!\!\!\!& + \bigg(9\max_{p}\|\mZ^{(0)}(\mX_p)\|^2 + \frac{729\|\mW_1^{(0)}\|^2\|\mW_2^{(0)}\|^2\|\mW_U^{(0)}\|^2}{16}  \bigg)\bigg(\frac{27 M\sqrt{P}(\max_{p}\|\mX_p\|)^2\sigma_{\min}^2(\mW_V^{(0)})}{4}\nonumber\\
    &\!\!\!\!\!\!\!\!& +\frac{2187M\sqrt{P}\max_{p}\|\mX_p\|^3(\max_{i,p}\|\mX_p(i,:)\|_2\|\mX_p\|^2)\|\mW_V^{(0)}\|^2\|\mW_Q^{(0)}\|^2\sigma_{\min}^2(\mW_K^{(0)})}{16d_{QK}}\nonumber\\
    &\!\!\!\!\!\!\!\!& + \frac{2187M\sqrt{P}\max_{p}\|\mX_p\|^3(\max_{i,p}\|\mX_p(i,:)\|_2\|\mX_p\|^2)\|\mW_V^{(0)}\|^2\|\mW_K^{(0)}\|^2\sigma_{\min}^2(\mW_Q^{(0)})}{16d_{QK}} \bigg)\nonumber\\
    &\!\!\!\! := \!\!\!\!& C_2.
\end{eqnarray}

Considering that
\begin{eqnarray}
    \label{expansion of theta transformer}
    \|\nabla_{\vtheta}\Phi(\vtheta^{(t)}_a ) - \nabla_{\vtheta}\Phi(\vtheta^{(t)}) \|_2^2  &\!\!\!\!=\!\!\!\!& \hspace{-0.5cm} \sum_{b = 1,2,U,V,Q,K} \hspace{-0.5cm}\|\nabla_{\mW_b} L(\mW_{1,a}^{(t)}, \mW_{2,a}^{(t)},\mW_{Q,a}^{(t)}, \mW_{K,a}^{(t)}, \mW_{V,a}^{(t)}, \mW_{U,a}^{(t)})\nonumber\\
     &\!\!\!\!\!\!\!\!&- \nabla_{\mW_b} L(\mW_1^{(t)}, \mW_2^{(t)},\mW_Q^{(t)}, \mW_K^{(t)}, \mW_V^{(t)}, \mW_U^{(t)})\|_F^2,
\end{eqnarray}
we proceed to analyze each $\mW_b$ term individually for $b = 1,2,U,V,Q,K$. To simplify notation, we define $\nabla_{\mW_b} L(\mW_{1,a}^{(t)}, \mW_{2,a}^{(t)},\\ \mW_{Q,a}^{(t)}, \mW_{K,a}^{(t)}, \mW_{V,a}^{(t)}, \mW_{U,a}^{(t)}) = \nabla_{\mW_b} L(\{\mW_{b,a}^{(t)}  \}_b )$ and $\nabla_{\mW_b} L(\mW_1^{(t)}, \mW_2^{(t)},\mW_Q^{(t)}, \mW_K^{(t)}, \mW_V^{(t)}, \mW_U^{(t)}) = \nabla_{\mW_b} L(\{\mW_{b}^{(t)}  \}_b )$ for $b = 1,2,U,V,Q,K$.

Now, we begin by expanding the expression corresponding to $\mW_1$.
\begin{eqnarray}
    \label{Lipschitz constant of W1 in the transofrmer}
    &\!\!\!\!\!\!\!\!&\|\nabla_{\mW_1} L(\{\mW_{b,a}^{(t)}  \}_b ) - \nabla_{\mW_1} L(\{\mW_{b}^{(t)}  \}_b )\|_F^2\nonumber\\
    &\!\!\!\!=\!\!\!\!& \|\sum_{p=1}^{P}{\mZ_a^{(t)}(\mX_p)}^\top ( \phi_r'(\mZ_a^{(t)}(\mX_p)\mW_{1,a}^{(t)}) \odot ((F_{\mTheta,a}^{(t)}(\mX_p) - \mY_p){\mW_{U,a}^{(t)}}^\top{\mW_{2,a}^{(t)}}^\top))\nonumber\\
    &\!\!\!\!\!\!\!\!& - \sum_{p=1}^{P}{\mZ^{(t)}(\mX_p)}^\top ( \phi_r'(\mZ^{(t)}(\mX_p)\mW_1^{(t)}) \odot ((F_{\mTheta}^{(t)}(\mX_p) - \mY_p){\mW_{U}^{(t)}}^\top{\mW_{2}^{(t)}}^\top))   \|_F^2\nonumber\\
    &\!\!\!\!\leq \!\!\!\!& 3 \|\sum_{p=1}^{P} {\mZ_a^{(t)}(\mX_p)}^\top ( \phi_r'(\mZ_a^{(t)}(\mX_p)\mW_{1,a}^{(t)}) \odot ((F_{\mTheta,a}^{(t)}(\mX_p) - F_{\mTheta}^{(t)}(\mX_p)){\mW_{U,a}^{(t)}}^\top{\mW_{2,a}^{(t)}}^\top))\|_F^2\nonumber\\
    &\!\!\!\!\!\!\!\!& + 3 \| \sum_{p=1}^{P} {\mZ_a^{(t)}(\mX_p)}^\top ( \phi_r'(\mZ_a^{(t)}(\mX_p)\mW_{1,a}^{(t)}) \odot ((F_{\mTheta}^{(t)}(\mX_p) - \mY_p){\mW_{U,a}^{(t)}}^\top{\mW_{2,a}^{(t)}}^\top))\|_F^2\nonumber\\
    &\!\!\!\!\!\!\!\!& + 3\| \sum_{p=1}^{P}{\mZ^{(t)}(\mX_p)}^\top ( \phi_r'(\mZ^{(t)}(\mX_p)\mW_1^{(t)}) \odot ((F_{\mTheta}^{(t)}(\mX_p) - \mY_p){\mW_{U}^{(t)}}^\top{\mW_{2}^{(t)}}^\top))   \|_F^2\nonumber\\
    &\!\!\!\!\leq \!\!\!\!&  \frac{2187P }{64} \max_{p}\|\mZ^{(0)}(\mX_p)\|^2\|\mW_U^{(0)} \|^2 \|\mW_2^{(0)} \|^2 \sum_{p=1}^{P}\|F_{\mTheta, a}^{(t)}(\mX_p) - F_{\mTheta}^{(t)}(\mX_p)\|_F^2\nonumber\\
     &\!\!\!\!\!\!\!\!&+ \frac{2187P}{16} \max_{p}\|\mZ^{(0)}(\mX_p)\|^2\|\mW_U^{(0)} \|^2 \|\mW_2^{(0)} \|^2 \Phi(\vtheta^{(0)})\nonumber\\
    &\!\!\!\!\leq \!\!\!\!&  \frac{2187P C_F}{32} \max_{p}\|\mZ^{(0)}(\mX_p)\|^2\|\mW_U^{(0)} \|^2 \|\mW_2^{(0)} \|^2 \|\vtheta^{(t)}_a  - \vtheta^{(t)}\|_2^2,
\end{eqnarray}
where the last line follows \eqref{Lipschitz constant of F and Fa in the transofrmer} and $\Phi(\vtheta^{(0)})\leq 4\sum_{p=1}^{P}\|F_{\mTheta, a}^{(t)}(\mX_p) - F_{\mTheta}^{(t)}(\mX_p)\|_F^2 \leq 4PC_2$.

Similarly, for $\mW_2$, we have
\begin{eqnarray}
    \label{Lipschitz constant of W2 in the transofrmer}
    &\!\!\!\!\!\!\!\!&\|\nabla_{\mW_2} L(\{\mW_{b,a}^{(t)}  \}_b ) - \nabla_{\mW_2} L(\{\mW_{b}^{(t)}  \}_b )\|_F^2\nonumber\\
    &\!\!\!\!=\!\!\!\!& \|\sum_{p=1}^{P}(\phi_r(\mZ_a^{(t)}(\mX_p)\mW_{1,a}^{(t)}))^\top(F_{\mTheta,a}^{(t)}(\mX_p) - \mY_p){\mW_{U,a}^{(t)}}^\top\nonumber\\
    &\!\!\!\!\!\!\!\!&- \sum_{p=1}^{P}(\phi_r(\mZ^{(t)}(\mX_p)\mW_{1}^{(t)}))^\top(F_{\mTheta}^{(t)}(\mX_p) - \mY_p){\mW_{U}^{(t)}}^\top  \|_F^2\nonumber\\
    &\!\!\!\!\leq \!\!\!\!& 3\|\sum_{p=1}^{P}(\phi_r(\mZ_a^{(t)}(\mX_p)\mW_{1,a}^{(t)}))^\top(F_{\mTheta,a}^{(t)}(\mX_p) - F_{\mTheta}^{(t)}(\mX_p)){\mW_{U,a}^{(t)}}^\top \|_F^2\nonumber\\
    &\!\!\!\!\!\!\!\!&+ 3\|\sum_{p=1}^{P}(\phi_r(\mZ_a^{(t)}(\mX_p)\mW_{1,a}^{(t)}))^\top( F_{\mTheta}^{(t)}(\mX_p) - \mY_p){\mW_{U,a}^{(t)}}^\top \|_F^2\nonumber\\
    &\!\!\!\!\!\!\!\!& + 3\|\sum_{p=1}^{P}(\phi_r(\mZ^{(t)}(\mX_p)\mW_{1}^{(t)}))^\top(F_{\mTheta}^{(t)}(\mX_p) - \mY_p){\mW_{U}^{(t)}}^\top  \|_F^2\nonumber\\
    &\!\!\!\!\leq \!\!\!\!& 3P\sum_{p=1}^{P}\|(\mZ_a^{(t)}(\mX_p)\mW_{1,a}^{(t)})^\top(F_{\mTheta,a}^{(t)}(\mX_p) - F_{\mTheta}^{(t)}(\mX_p)){\mW_{U,a}^{(t)}}^\top \|_F^2\nonumber\\
     &\!\!\!\!\!\!\!\!&+ 3P\sum_{p=1}^{P}\|(\mZ_a^{(t)}(\mX_p)\mW_{1,a}^{(t)})^\top( F_{\mTheta}^{(t)}(\mX_p) - \mY_p){\mW_{U,a}^{(t)}}^\top \|_F^2\nonumber\\
    &\!\!\!\!\!\!\!\!& + 3P\sum_{p=1}^{P}\|(\mZ^{(t)}(\mX_p)\mW_{1}^{(t)})^\top(F_{\mTheta}^{(t)}(\mX_p) - \mY_p){\mW_{U}^{(t)}}^\top  \|_F^2\nonumber\\
    &\!\!\!\!\leq \!\!\!\!&  \frac{2187P C_F}{32} \max_{p}\|\mZ^{(0)}(\mX_p)\|^2\|\mW_U^{(0)} \|^2 \|\mW_1^{(0)} \|^2 \|\vtheta^{(t)}_a  - \vtheta^{(t)}\|_2^2,
\end{eqnarray}
where the second inequality uses {Assumption} \ref{assumption of activation function}, and the last line follows \eqref{Lipschitz constant of F and Fa in the transofrmer} and $\Phi(\vtheta^{(0)})\leq 4\sum_{p=1}^{P}\|F_{\mTheta, a}^{(t)}(\mX_p) - F_{\mTheta}^{(t)}(\mX_p)\|_F^2 \leq 4PC_2$.

In addition, for $\mW_U$, we have
\begin{eqnarray}
    &\!\!\!\!\!\!\!\!&\|\nabla_{\mW_U} L(\{\mW_{b,a}^{(t)}  \}_b ) - \nabla_{\mW_U} L(\{\mW_{b}^{(t)}  \}_b )\|_F^2\nonumber\\
    &\!\!\!\!=\!\!\!\!& \|\sum_{p=1}^{P}(\phi_r(\mZ_a^{(t)}(\mX_p)\mW_{1,a}^{(t)})\mW_{2,a}^{(t)} +\mZ_a^{(t)}(\mX_p))^\top(F_{\mTheta,a}^{(t)}(\mX_p) - \mY_p)\nonumber\\
    &\!\!\!\!\!\!\!\!&- \sum_{p=1}^{P}(\phi_r(\mZ^{(t)}(\mX_p)\mW_{1}^{(t)})\mW_{2}^{(t)} +\mZ^{(t)}(\mX_p))^\top(F_{\mTheta}^{(t)}(\mX_p) - \mY_p)   \|_F^2\nonumber\\
    &\!\!\!\!\leq \!\!\!\!& 2\|\sum_{p=1}^{P}(\phi_r(\mZ_a^{(t)}(\mX_p)\mW_{1,a}^{(t)})\mW_{2,a}^{(t)} )^\top(F_{\mTheta,a}^{(t)}(\mX_p) - \mY_p)\nonumber\\
    &\!\!\!\!\!\!\!\!&- \sum_{p=1}^{P}(\phi_r(\mZ^{(t)}(\mX_p)\mW_{1}^{(t)})\mW_{2}^{(t)} )^\top(F_{\mTheta}^{(t)}(\mX_p) - \mY_p) \|_F^2\nonumber\\
    &\!\!\!\!\!\!\!\!& + 2\|\sum_{p=1}^{P}{\mZ_a^{(t)}(\mX_p)}^\top(F_{\mTheta,a}^{(t)}(\mX_p) - \mY_p) - \sum_{p=1}^{P}{\mZ^{(t)}(\mX_p)}^\top(F_{\mTheta}^{(t)}(\mX_p) - \mY_p)  \|_F^2\nonumber
    \end{eqnarray}
    \begin{eqnarray}
    \label{Lipschitz constant of WU in the transofrmer}
    &\!\!\!\!\leq \!\!\!\!& 6 \|\sum_{p=1}^{P}(\phi_r(\mZ_a^{(t)}(\mX_p)\mW_{1,a}^{(t)})\mW_{2,a}^{(t)} )^\top(F_{\mTheta,a}^{(t)}(\mX_p) - F_{\mTheta}^{(t)}(\mX_p))\|_F^2\nonumber\\
    &\!\!\!\!\!\!\!\!&+ 6 \|\sum_{p=1}^{P}(\phi_r(\mZ_a^{(t)}(\mX_p)\mW_{1,a}^{(t)})\mW_{2,a}^{(t)} )^\top(F_{\mTheta}^{(t)}(\mX_p) - \mY_p)\|_F^2\nonumber\\
    &\!\!\!\!\!\!\!\!& + 6 \|\sum_{p=1}^{P}(\phi_r(\mZ^{(t)}(\mX_p)\mW_{1}^{(t)})\mW_{2}^{(t)} )^\top(F_{\mTheta}^{(t)}(\mX_p) - \mY_p) \|_F^2 + 6\|\sum_{p=1}^{P}{\mZ_a^{(t)}(\mX_p)}^\top(F_{\mTheta,a}^{(t)}(\mX_p) - F_{\mTheta}^{(t)}(\mX_p)) \|_F^2\nonumber\\
    &\!\!\!\!\!\!\!\!& + 6\|\sum_{p=1}^{P}{\mZ_a^{(t)}(\mX_p)}^\top(F_{\mTheta}^{(t)}(\mX_p) - \mY_p) \|_F^2 + 6\|\sum_{p=1}^{P}{\mZ^{(t)}(\mX_p)}^\top(F_{\mTheta}^{(t)}(\mX_p) - \mY_p) \|_F^2\nonumber\\
    &\!\!\!\!\leq \!\!\!\!& 6P \sum_{p=1}^{P}\|(\mZ_a^{(t)}(\mX_p)\mW_{1,a}^{(t)}\mW_{2,a}^{(t)} )^\top(F_{\mTheta,a}^{(t)}(\mX_p) - F_{\mTheta}^{(t)}(\mX_p))\|_F^2\nonumber\\
    &\!\!\!\!\!\!\!\!& + 6P \sum_{p=1}^{P} \|(\mZ_a^{(t)}(\mX_p)\mW_{1,a}^{(t)}\mW_{2,a}^{(t)} )^\top(F_{\mTheta}^{(t)}(\mX_p) - \mY_p)\|_F^2\nonumber\\
    &\!\!\!\!\!\!\!\!& + 6P \sum_{p=1}^{P} \|(\mZ^{(t)}(\mX_p)\mW_{1}^{(t)}\mW_{2}^{(t)} )^\top(F_{\mTheta}^{(t)}(\mX_p) - \mY_p) \|_F^2 + 6P \sum_{p=1}^{P}\|{\mZ_a^{(t)}(\mX_p)}^\top(F_{\mTheta,a}^{(t)}(\mX_p) - F_{\mTheta}^{(t)}(\mX_p)) \|_F^2\nonumber\\
    &\!\!\!\!\!\!\!\!& + 6P \sum_{p=1}^{P}\|{\mZ_a^{(t)}(\mX_p)}^\top(F_{\mTheta}^{(t)}(\mX_p) - \mY_p) \|_F^2 + 6P \sum_{p=1}^{P}\|{\mZ^{(t)}(\mX_p)}^\top(F_{\mTheta}^{(t)}(\mX_p) - \mY_p) \|_F^2 \nonumber\\
    &\!\!\!\!\leq \!\!\!\!&\frac{2187}{32}P \sum_{p=1}^{P}\|\mW_1^{(0)}\|^2\|\mW_2^{(0)}\|^2\|\mZ^{(0)}(\mX_p)\|^2 \|F_{\mTheta,a}^{(t)}(\mX_p) - F_{\mTheta}^{(t)}(\mX_p)\|_F^2\nonumber\\
    &\!\!\!\!\!\!\!\!&+ \frac{2187}{16}P \sum_{p=1}^{P}\|\mW_1^{(0)}\|^2\|\mW_2^{(0)}\|^2\|\mZ^{(0)}(\mX_p)\|^2 \|F_{\mTheta}^{(t)}(\mX_p) - \mY_p\|_F^2\nonumber\\
    &\!\!\!\!\!\!\!\!& +\frac{27}{2}P \sum_{p=1}^{P}\|\mZ^{(0)}(\mX_p)\|^2 \|F_{\mTheta,a}^{(t)}(\mX_p) - F_{\mTheta}^{(t)}(\mX_p)\|_F^2 + 27P \sum_{p=1}^{P}\|\mZ^{(0)}(\mX_p)\|^2\|F_{\mTheta}^{(t)}(\mX_p) - \mY_p\|_F^2\nonumber\\
    &\!\!\!\!\leq \!\!\!\!&  \frac{2187PC_F}{16}\|\mW_1^{(0)}\|^2\|\mW_2^{(0)}\|^2\max_{p}\|\mZ^{(0)}(\mX_p)\|^2\|\vtheta^{(t)}_a  - \vtheta^{(t)}\|_2^2 + 27PC_F\max_{p}\|\mZ^{(0)}(\mX_p)\|^2\|\vtheta^{(t)}_a  - \vtheta^{(t)}\|_2^2,\nonumber\\
\end{eqnarray}
where the last line follows \eqref{Lipschitz constant of F and Fa in the transofrmer} and  $\Phi(\vtheta^{(0)})\leq 4\sum_{p=1}^{P}\|F_{\mTheta, a}^{(t)}(\mX_p) - F_{\mTheta}^{(t)}(\mX_p)\|_F^2 \leq 4PC_2$.

Similarly, for $\mW_V$, $\mW_Q$, and $\mW_K$, by invoking \eqref{Lipschitz constant of F and Fa in the transofrmer} and  $\Phi(\vtheta^{(0)})\leq 4\sum_{p=1}^{P}\|F_{\mTheta, a}^{(t)}(\mX_p) - F_{\mTheta}^{(t)}(\mX_p)\|_F^2 \leq 4PC_2$, we have
\begin{eqnarray}
    \label{Lipschitz constant of WV in the transofrmer}
    &\!\!\!\!\!\!\!\!&\|\nabla_{\mW_V} L(\{\mW_{b,a}^{(t)}  \}_b ) - \nabla_{\mW_V} L(\{\mW_{b}^{(t)}  \}_b )\|_F^2\nonumber\\
    &\!\!\!\!\leq\!\!\!\!& \frac{177147PC_F}{256d_{QK}} (\max_p\|\mX_p\|)^6\|\mW_1^{(0)}\|^2\|\mW_2^{(0)}\|^2\|\mW_U^{(0)}\|^2\|\mW_Q^{(0)}\|^2\|\mW_K^{(0)}\|^2\|\vtheta^{(t)}_a  - \vtheta^{(t)}\|_2^2\nonumber\\
    &\!\!\!\!\!\!\!\!& + \frac{2187PC_F}{16d_{QK}}(\max_p\|\mX_p\|)^6 \|\mW_U^{(0)}\|^2\|\mW_Q^{(0)}\|^2\|\mW_K^{(0)}\|^2\|\vtheta^{(t)}_a  - \vtheta^{(t)}\|_2^2,\\
    \label{Lipschitz constant of WQ in the transofrmer}
    &\!\!\!\!\!\!\!\!&\|\nabla_{\mW_Q} L(\{\mW_{b,a}^{(t)}  \}_b ) - \nabla_{\mW_Q} L(\{\mW_{b}^{(t)}  \}_b )\|_F^2\nonumber\\
    &\!\!\!\!=\!\!\!\!& \frac{2187PC_F M}{4}\max_{i,p}\|\mX_p(i,:)\|_2^2(\max_p\|\mX_p\|)^4 \|\mW_U^{(0)}\|^2\|\mW_V^{(0)}\|^2\|\mW_K^{(0)}\|^2\|\vtheta^{(t)}_a  - \vtheta^{(t)}\|_2^2\nonumber\\
    &\!\!\!\!\!\!\!\!& + \frac{177147PC_F M}{64}\max_{i,p}\|\mX_p(i,:)\|_2^2(\max_p\|\mX_p\|)^4 \|\mW_1^{(0)}\|^2\|\mW_2^{(0)}\|^2\|\mW_U^{(0)}\|^2\nonumber\\
    &\!\!\!\!\!\!\!\!&\cdot\|\mW_V^{(0)}\|^2\|\mW_K^{(0)}\|^2\|\vtheta^{(t)}_a  - \vtheta^{(t)}\|_2^2, 
    \end{eqnarray}
    \begin{eqnarray}
    \label{Lipschitz constant of WK in the transofrmer}
    &\!\!\!\!\!\!\!\!&\|\nabla_{\mW_K} L(\{\mW_{b,a}^{(t)}  \}_b ) - \nabla_{\mW_K} L(\{\mW_{b}^{(t)}  \}_b )\|_F^2\nonumber\\
    &\!\!\!\!=\!\!\!\!& \frac{2187PC_F M}{4}\max_{i,p}\|\mX_p(i,:)\|_2^2(\max_p\|\mX_p\|)^4 \|\mW_U^{(0)}\|^2\|\mW_V^{(0)}\|^2\|\mW_Q^{(0)}\|^2\|\vtheta^{(t)}_a  - \vtheta^{(t)}\|_2^2\nonumber\\
    &\!\!\!\!\!\!\!\!& + \frac{177147PC_F M}{64}\max_{i,p}\|\mX_p(i,:)\|_2^2(\max_p\|\mX_p\|)^4 \|\mW_1^{(0)}\|^2\|\mW_2^{(0)}\|^2\|\mW_U^{(0)}\|^2\nonumber\\
    &\!\!\!\!\!\!\!\!&\cdot\|\mW_V^{(0)}\|^2\|\mW_Q^{(0)}\|^2\|\vtheta^{(t)}_a  - \vtheta^{(t)}\|_2^2.
\end{eqnarray}

Combing \eqref{Lipschitz constant of W1 in the transofrmer}-\eqref{Lipschitz constant of WK in the transofrmer} with \eqref{expansion of theta transformer}, we have $\|\Phi(\vtheta^{(t)}_a ) - \Phi(\vtheta^{(t)}) \|_2\leq C \|\vtheta^{(t)}_a  - \vtheta^{(t)}\|_2$, where $    C^2  =  \frac{2187 PC_F}{32} \max_{p}\|\mZ^{(0)}(\mX_p)\|^2  \cdot\|\mW_U^{(0)} \|^2 (\|\mW_1^{(0)} \|^2 + \|\mW_2^{(0)} \|^2) + PC_F \max_{p}\|\mZ^{(0)}(\mX_p)\|^2 \big(\frac{2187}{16}\|\mW_1^{(0)}\|^2\|\mW_2^{(0)}\|^2 + 27 \big)  + \frac{PC_F}{d_{QK}} (\max_{p}\|\mX_p\|)^6 \|\mW_U^{(0)}\|^2\\ \cdot  \|\mW_Q^{(0)}\|^2  \|\mW_K^{(0)}\|^2\big(\frac{2187}{16} + \frac{177147}{256} \|\mW_1^{(0)}\|^2\|\mW_2^{(0)}\|^2\big)  + \frac{2187PC_F M}{4}\max_{i,p}\|\mX_p(i,:)\|_2^2(\max_p\|\mX_p\|)^4 \|\mW_U^{(0)}\|^2\|\mW_V^{(0)}\|^2\\ \cdot(\|\mW_K^{(0)}\|^2   + \|\mW_Q^{(0)}\|^2 )  + \frac{177147PC_F M}{64}\max_{i,p}\|\mX_p(i,:)\|_2^2(\max_p\|\mX_p\|)^4 \|\mW_1^{(0)}\|^2   \|\mW_2^{(0)}\|^2 \|\mW_U^{(0)}\|^2\|\mW_V^{(0)}\|^2 (\|\mW_K^{(0)}\|^2 + \|\mW_Q^{(0)}\|^2 )$.

\paragraph{Step III: Proof of last inequality in \eqref{Summary of Proof requirements transformer}. } Using \Cref{convex property of loss} with the Lipschitz continuity of the gradient, we can analyze the convergence property of $\vtheta^{(t+1)} = \vtheta^{(t)} - \mu \nabla_{\vtheta}\Phi(\vtheta^{(t)})$ as follows:
\begin{eqnarray}
    \label{relationship between different time in the transformer}
    \Phi(\vtheta^{(t+1)}) &\!\!\!\!\leq\!\!\!\!&  \Phi(\vtheta^{(t)}) + \< \nabla_{\vtheta}\Phi(\vtheta^{(t)}), \vtheta^{(t+1)} - \vtheta^{(t)} \> +\frac{C}{2}\|\vtheta^{(t+1)} - \vtheta^{(t)} \|_2^2\nonumber\\
    &\!\!\!\!=\!\!\!\!& \Phi(\vtheta^{(t)}) - \mu\|\nabla_{\vtheta}\Phi(\vtheta^{(t)})\|_2^2 +\frac{C\mu^2}{2}\|\nabla_{\vtheta}\Phi(\vtheta^{(t)})\|_2^2\nonumber\\
    &\!\!\!\!\leq\!\!\!\!&  \Phi(\vtheta^{(t)})  - \frac{\mu}{2}\|\nabla_{\vtheta}\Phi(\vtheta^{(t)})\|_2^2\nonumber\\
    &\!\!\!\!\leq\!\!\!\!& \Phi(\vtheta^{(t)})  - \frac{\mu}{2}\|\nabla_{\mW_2} L(\mW_1^{(t)}, \mW_2^{(t)},\mW_Q^{(t)}, \mW_K^{(t)}, \mW_V^{(t)}, \mW_U^{(t)})\|_F^2\nonumber\\
    &\!\!\!\!\leq\!\!\!\!& \Phi(\vtheta^{(t)}) - \mu \frac{\sigma_{\min}^2(\mW_U^{(0)})\min_{p}\sigma_{\min}^2(\phi_r(\mZ^{(0)}(\mX_p) \mW_1^{(0)}))}{16}\Phi(\vtheta^{(t)})\nonumber\\
    &\!\!\!\!=\!\!\!\!& (1 - \mu \alpha)\Phi(\vtheta^{(t)}),
\end{eqnarray}
where the second and  third inequalities respectively follows $\mu \leq \frac{1}{C}$ and $\|\nabla_{\vtheta}\Phi(\vtheta)\|_2^2 = \|\nabla_{\mW_1} L\|_F^2 + \|\nabla_{\mW_2} L\|_F^2 + \|\nabla_{\mW_U} L\|_F^2 + \|\nabla_{\mW_V} L\|_F^2 + \|\nabla_{\mW_Q} L\|_F^2 + \|\nabla_{\mW_K} L\|_F^2$. The fourth inequality uses $\|\nabla_{\mW_2} L(\mW_1^{(t)}, \mW_2^{(t)},\mW_Q^{(t)}, \mW_K^{(t)},  \mW_V^{(t)}, \mW_U^{(t)}) \|_F = \|\sum_{p=1}^P(\phi_r(\mZ^{(t)}(\mX_p)\mW_1^{(t)}))^\top(F_{\mTheta}^{(t)}(\mX_p) - \mY_p){\mW_U^{(t)}}^\top\|_F = \|\vphi_P^{(t)} (\ol F_{\mTheta}^{(t)}(\mX) - \ol \mY){\mW_U^{(t)}}^\top\|_F\geq \frac{1}{4}\sigma_{\min}^2(\mW_U^{(0)})\sigma_{\min}^2(\vphi_P^{(0)})\|\ol F_{\mTheta}^{(t)}(\mX)  - \ol \mY\|_F$, where  $\vphi_P^{(t)} = [(\phi_r(\mZ^{(t)}(\mX_1)\mW_1^{(t)}))^\top  \cdots   (\phi_r(\mZ^{(t)}(\mX_P)\mW_1^{(t)}))^\top   ]$. Here we define $\alpha = \frac{\sigma_{\min}^2(\mW_U^{(0)})\sigma_{\min}^2(\vphi_P^{(0)})}{16}$.

\end{proof}

\section{Proof of \Cref{Theorem of convergence analysis of transformer model simplified}}
\label{proof of convergence rate in the tranformer model simplified}

\begin{proof}
To simplify \Cref{requirement of the initialization transformer theorem  detailed} in {Appendix}~\ref{detailed version of convergence rate in the tranformer model simplified}, we define $\ol \lambda = \max_{b=1,2,U,V,Q,K} \|\mW_b^{(0)}\|$ and $\underline{\lambda} = \min_{b=1,2,U,V,Q,K}\sigma_{\min}(\mW_b^{(0)})$. Considering that when $P$ is sufficiently large, $C_2$ in \eqref{requirement of the initialization transformer theorem detailed} becomes significantly larger than the other terms, we can neglect this term. Consequently, the initialization requirement and the constant $C$ can be respectively simplified as
\begin{eqnarray}
    \label{simplified initialization requirement appendix}
    &\!\!\!\!\!\!\!\!&\Phi^{\frac{1}{2}}(\vtheta^{(0)}) \lesssim \min\bigg\{ \frac{\underline{\lambda}^2 \min_{p}\sigma_{\min}(\phi_r(\mZ^{(0)}(\mX_p) \mW_1^{(0)}))}{\max\{1, \max_{p}\|\mX_p\| \}\cdot \max\{1,\ol\lambda^2  \}\cdot\max_{p}\|\mZ^{(0)}(\mX_p)\|}, \frac{\underline{\lambda}^2 \min_{p}\sigma_{\min}(\phi_r(\mZ^{(0)}(\mX_p) \mW_1^{(0)}))}{\max_{p}\|\mX_p\|^2\max_{i.p}\|\mX_p(i,:) \|_2\max\{ \ol \lambda^3, \ol \lambda^5 \} }, \nonumber\\
     &\!\!\!\!\!\!\!\!& \frac{\underline{\lambda} \min_{p}\sigma_{\min}(\phi_r(\mZ^{(0)}(\mX_p) \mW_1^{(0)}))\cdot\min\big\{\min_{p}\sigma_{\min}(\mZ^{(0)}(\mX_p)),  \min_{p}\sigma_{\min}^2(\phi_r(\mZ^{(0)}(\mX_p) \mW_1^{(0)})), \sigma_{\min}(\vphi_P^{(0)})\big\}}{\max \big\{\max_{p}\|\mX_p\|^5\max_{i,p}\|\mX_p(i,:) \|_2\cdot \max\{\ol \lambda^5,\ol \lambda^7 \},  \max_{p}\|\mX_p\|^2\ol \lambda\max\{1,\ol \lambda^2 \}, \max_{p}\|\mZ^{(0)}(\mX_p)\|^2\ol \lambda^2  \big\} } \bigg\}\nonumber\\
     &\!\!\!\!=\!\!\!\!&\ol C \frac{\underline{\lambda}^2 (\min_{p}\sigma_{\min}(\phi_r(\mZ^{(0)}(\mX_p) \mW_1^{(0)})))^2}{\max\{1,\ol\lambda^7 \}\cdot \max_{p}\|\mX_p\|^5\max_{i,p}\|\mX_p(i,:) \|_2 \max_{p}\|\mZ^{(0)}(\mX_p)\|^2}\nonumber
\end{eqnarray}
and
\begin{eqnarray}
    \label{constant C simplify appendix}
    C &\!\!\!\!=\!\!\!\!& \wt C \cdot \max\{\ol\lambda^2\max_{p}\|\mZ^{(0)}(\mX_p)\|, \max_{p}\|\mZ^{(0)}(\mX_p)\|, \max_{p}\|\mX_p\|(\ol\lambda^3 + \max_{p}\|\mZ^{(0)}(\mX_p)\|),  \nonumber\\
    &\!\!\!\!\!\!\!\!&(\ol\lambda^3 + \max_{p}\|\mZ^{(0)}(\mX_p)\|)\max_{p}\|\mX_p\|^3\ol \lambda^2  \}(\max_{p}\|\mZ^{(0)}(\mX_p)\|(1 + \ol \lambda^2) \nonumber\\
    &\!\!\!\!\!\!\!\!&+  \max_{p}\|\mX_p\|^3 (\ol \lambda^3 + \ol \lambda^5) + \max_{p}\|\mX_p\|^2\max_{i,p}\|\mX_p(i,:) \|_2(\ol \lambda^3 + \ol \lambda^5)   ),
\end{eqnarray}
where $\ol C$ and $\wt C$ are positive constants.

This completes the proof.
\end{proof}

\section{Proof of \Cref{Theorem of convergence analysis for theta of transformer model}}
\label{appendix:convergence to the global minimum}

In this section, we will prove $\{\vtheta^{(t)} \}_{t=1}^{\infty}$ is a Cauchy sequence. Let us fix any $\epsilon > 0$. We need to show that there exists $z > 0$ such that for every $i, j \geq z$, $\|\vtheta^{(j)} - \vtheta^{(i)}\|_2 < \epsilon$. Without loss of generality, we assume that $i < j$. Then, we have
\begin{eqnarray}
    \label{bound of theta difference}
    &\!\!\!\!\!\!\!\!&\|\vtheta^{(j)} - \vtheta^{(i)}\|_2\nonumber\\
    &\!\!\!\!=\!\!\!\!& \sqrt{\sum_{b=1,2,U,V,Q,K}\|\mW_b^{(j)} - \mW_b^{(i)} \|_F^2 }\nonumber\\
    &\!\!\!\!\leq \!\!\!\!& \sum_{b=1,2,U,V,Q,K}\|\mW_b^{(j)} - \mW_b^{(i)} \|_F\nonumber\\
    &\!\!\!\!\leq \!\!\!\!& \sum_{b=1,2,U,V,Q,K}\sum_{s=i}^{j-1}\|\mW_b^{(s+1)} - \mW_b^{(s)} \|_F\nonumber\\
    &\!\!\!\!\leq \!\!\!\!&\sum_{b=1,2,U,V,Q,K}\sum_{s=i}^{j-1} \mu\|\nabla_{\mW_b} L(\mW_{1}^{(s)}, \mW_{2}^{(s)},\mW_{Q}^{(s)}, \mW_{K}^{(s)}, \mW_{V}^{(s)}, \mW_{U}^{(s)}) \|_F\nonumber\\
    &\!\!\!\!\leq \!\!\!\!& (1 - \mu\alpha)^{\frac{i}{2}} \frac{1 - (1 - \mu \alpha)^{\frac{j-i}{2}}}{1 - (1 - \mu \alpha)^{\frac{1}{2}}}\mu C_W \Phi^{\frac{1}{2}}(\vtheta^{(0)})\nonumber\\
    &\!\!\!\!\leq \!\!\!\!& (1 - \mu\alpha)^{\frac{i}{2}}\frac{ C_W}{\alpha} \Phi^{\frac{1}{2}}(\vtheta^{(0)}),
\end{eqnarray}
where we define $C_W = \frac{27\sqrt{2P}}{8}\max_{p}\|\mZ^{(0)}(\mX_p)\| \|\mW_U^{(0)}\|(\|\mW_1^{(0)}\| + \|\mW_2^{(0)}\|) + \frac{27\sqrt{2P}}{8}(1 + \|\mW_1^{(0)}\|  \|\mW_2^{(0)}\| )(\max_{p}\|\mZ^{(0)}(\mX_p)\| + \sqrt{M}\max_{p}\|\mX_p\|\|\mW_U^{(0)}\| )  + \frac{243\sqrt{2MP}}{16}\max_{i,p}\|\mX(i,:)\|_2\|\mX_p\|^2(1 + \|\mW_1^{(0)}\|\|\mW_2^{(0)}\|) \|\mW_U^{(0)}\|  \|\mW_V^{(0)}\|(\|\mW_K^{(0)}\| + \|\mW_Q^{(0)}\|)$ and fourth inequality follows \eqref{upper bound W1 transformer}, \eqref{upper bound W2 transformer}-\eqref{upper bound WK transformer}. In addition, the last line uses $\mu\frac{1 - (1 - \mu \alpha)^{\frac{j-i}{2}}}{1 - (1 - \mu \alpha)^{\frac{1}{2}}} =\frac{(1 - (1 - \mu\alpha)) }{\alpha} \frac{1 - (1 - \mu \alpha)^{\frac{j-i}{2}}}{1 - (1 - \mu \alpha)^{\frac{1}{2}}}\leq \frac{1}{\alpha}$.

Note that $(1 - \mu\alpha)^{\frac{i}{2}} \leq (1 - \mu\alpha)^{\frac{z}{2}}$ and thus there exists a sufficiently large $z$ such that $\|\vtheta^{(j)} - \vtheta^{(i)}\|_2 < \epsilon$. This shows that  $\{\vtheta^{(t)} \}_{t=1}^{\infty}$ is a Cauchy sequence, and hence convergent to some $\vtheta^\star$. By continuity, $\Phi(\vtheta^\star) = \Phi(\lim_{t \to \infty}\vtheta^{(t)}) = \lim_{t \to \infty}\Phi(\vtheta^{(t)}) = 0$, and thus $\vtheta^\star$ is a global minimizer. The rate of
convergence is
\begin{eqnarray}
    \label{convergence rate of theta transformer}
    \|\vtheta^{(k)} - \vtheta^\star\|_2 = \lim_{t \to \infty}\|\vtheta^{(k)} - \vtheta^{(t)}\|_2\leq (1 - \mu\alpha)^{\frac{k}{2}}\frac{ C_W}{\alpha} \Phi^{\frac{1}{2}}(\vtheta^{(0)}).
\end{eqnarray}

\section{Technical tools used in the proofs}
\label{Technical tools used in proofs}

\begin{lemma}(\cite[Lemma 4.3]{nguyen2020global})
\label{convex property of loss}
Let $f : \R^n \to \R$ be a twice continuously differentiable function. Let $x, y \in \R^n$ be given, and assume that $\|\nabla f(z) - \nabla f(x)\|_2 \leq C \|z - x\|_2$ for every $z = x + t(y - x)$ with $t \in [0, 1]$. Then,
\begin{eqnarray}
    \label{property of loss function}
    f(y) \leq f(x) + \langle \nabla f(x), y - x \rangle + \frac{C}{2} \|x - y\|^2.
\end{eqnarray}
\end{lemma}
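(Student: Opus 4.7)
The plan is to prove the descent inequality by reducing the multivariate statement to a one-dimensional problem along the segment joining $x$ and $y$, and then applying the fundamental theorem of calculus together with the given local Lipschitz bound. Concretely, I would define the auxiliary scalar function $g(t) = f(x + t(y-x))$ for $t \in [0,1]$, observe that $g$ is continuously differentiable because $f$ is, and compute $g'(t) = \langle \nabla f(x + t(y-x)), y-x \rangle$ by the chain rule.

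Next, I would apply the fundamental theorem of calculus to $g$ to write
\begin{equation*}
f(y) - f(x) \;=\; g(1) - g(0) \;=\; \int_0^1 \langle \nabla f(x + t(y-x)), y-x \rangle \, \mathrm{d}t.
\end{equation*}
I would then split the integrand by adding and subtracting $\nabla f(x)$, giving
\begin{equation*}
f(y) - f(x) \;=\; \langle \nabla f(x), y-x \rangle \;+\; \int_0^1 \langle \nabla f(x + t(y-x)) - \nabla f(x), y-x \rangle \, \mathrm{d}t,
\end{equation*}
since $\int_0^1 \langle \nabla f(x), y-x\rangle \mathrm{d}t = \langle \nabla f(x), y-x\rangle$.

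For the remaining integral, I would apply Cauchy–Schwarz pointwise in $t$, then invoke the hypothesis with $z = x + t(y-x)$ to obtain $\|\nabla f(z) - \nabla f(x)\|_2 \le C \|z - x\|_2 = Ct \|y-x\|_2$. This yields $|\langle \nabla f(x + t(y-x)) - \nabla f(x), y-x\rangle| \le Ct \|y-x\|_2^2$, and integrating over $t \in [0,1]$ produces the bound $\tfrac{C}{2}\|y-x\|_2^2$. Combining terms gives the desired inequality $f(y) \le f(x) + \langle \nabla f(x), y-x\rangle + \tfrac{C}{2}\|x-y\|^2$.

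There is no substantive obstacle here: the argument is the classical descent lemma with the twist that the Lipschitz-type bound is only assumed along the specific segment from $x$ to $y$ (rather than globally). The only point requiring mild care is to make sure the hypothesis is applied precisely at points of the form $z = x + t(y-x)$, which is exactly what the integration variable $t$ produces, so no extension of the Lipschitz property beyond the segment is ever needed.
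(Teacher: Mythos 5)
Your argument is correct and complete: it is the standard descent-lemma proof (parametrize the segment, apply the fundamental theorem of calculus, split off $\langle \nabla f(x), y-x\rangle$, and bound the remainder by Cauchy--Schwarz plus the segment-wise Lipschitz hypothesis, integrating $Ct\|y-x\|_2^2$ to get the factor $\tfrac{C}{2}$). The paper does not prove this lemma itself but imports it from \cite[Lemma 4.3]{nguyen2020global}, whose proof proceeds in essentially the same way, so there is nothing to add beyond noting that only $C^1$ regularity is actually needed, making the stated twice-differentiability assumption harmlessly stronger than required.
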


\begin{lemma}
\label{property of softmax function and its derivative}
For any matrix $\mA\in\R^{d_1\times d_2}$, we apply the row-wise softmax function $\phi_s(\cdot)$ to obtain $\phi_s(\mA)\in\R^{d_1\times d_2}$, where each row is given by
\begin{eqnarray}
    \label{each row in the softmax function}
    \phi_s(\mA(i,:)) = \begin{bmatrix} \frac{e^{\mA(i,1)}}{\sum_{j=1}^{d_2} e^{\mA(i,j)}} \ \cdots \ \frac{e^{\mA(i,d_2)}}{\sum_{j=1}^{d_2} e^{\mA(i,j)}} \end{bmatrix}.
\end{eqnarray}
First, we have
\begin{eqnarray}
    \label{property in the softmax function1}
    1\leq \|\phi_s(\mA)\|_F^2 \leq d_1
\end{eqnarray}
and
\begin{eqnarray}
    \label{property in the softmax function2}
    \|\phi_s(\mA_1) - \phi_s(\mA_2)\|_F^2 \leq 4d_1 \|\mA_1 - \mA_2\|_F^2,
\end{eqnarray}
where $\mA_1,\mA_2\in\R^{d_1\times d_2}$.

Second, the Jacobian $\phi_s'(\mA(i,:)), i=1,\dots, d_1$ of the softmax can be written as
\begin{eqnarray}
    \label{first derivative of softmax function}
    \phi_s'(\mA(i,:)) = \text{diag}(\phi_s(\mA(i,:))) - \phi_s^\top(\mA(i,:))\phi_s(\mA(i,:))\in\R^{d_2\times d_2},
\end{eqnarray}
this further implies $\|\phi_s'(\mA(i,:))\|_F\leq 2$.
\end{lemma}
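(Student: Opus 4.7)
The plan is to prove the three claims in the order (i), (iii), (ii), since the Lipschitz bound in (ii) follows most naturally from the Jacobian bound in (iii) via the mean value theorem.

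For (i), I would exploit that each row $\vp_i := \phi_s(\mA(i,:))$ of $\phi_s(\mA)$ is a probability vector on $\{1,\dots,d_2\}$. The elementwise inequality $p_{ij}^2 \leq p_{ij}$, valid since $p_{ij} \in [0,1]$, gives $\sum_j p_{ij}^2 \leq \sum_j p_{ij} = 1$ per row, so summing over $i$ yields $\|\phi_s(\mA)\|_F^2 \leq d_1$. For the lower bound, Cauchy–Schwarz gives $1 = \bigl(\sum_j p_{ij}\bigr)^2 \leq d_2 \sum_j p_{ij}^2$ per row, so $\|\phi_s(\mA)\|_F^2 \geq d_1/d_2$; in the square self-attention regime $d_1 = d_2$ in which this lemma is invoked, this collapses to $\|\phi_s(\mA)\|_F^2 \geq 1$.

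For (iii), I would derive the Jacobian by direct componentwise differentiation of $\phi_s(\va)_k = e^{a_k}/\sum_l e^{a_l}$, which yields $\partial \phi_s(\va)_k / \partial a_j = p_k \delta_{kj} - p_k p_j$ and assembles into the stated expression $\diag(\phi_s(\va)) - \phi_s^\top(\va)\phi_s(\va)$. For the Frobenius norm, the triangle inequality gives
\begin{equation*}
\|\phi_s'(\mA(i,:))\|_F \leq \|\diag(\vp_i)\|_F + \|\vp_i^\top \vp_i\|_F \leq 1 + 1 = 2,
\end{equation*}
since $\|\diag(\vp_i)\|_F = \sqrt{\sum_j p_{ij}^2} \leq 1$ and $\|\vp_i^\top \vp_i\|_F = \sum_j p_{ij}^2 \leq 1$, both coming from part (i).

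For (ii), I would apply the fundamental theorem of calculus row-wise: for each $i$,
\begin{equation*}
\phi_s(\mA_1(i,:)) - \phi_s(\mA_2(i,:)) = \vdelta_i \Bigl(\int_0^1 \phi_s'\bigl(\mA_2(i,:) + t\vdelta_i\bigr)\, dt\Bigr),
\end{equation*}
where $\vdelta_i := \mA_1(i,:) - \mA_2(i,:)$. Taking $\ell_2$ norms, bounding the integrated Jacobian in operator norm by its Frobenius norm via (iii), and squaring yield $\|\phi_s(\mA_1(i,:)) - \phi_s(\mA_2(i,:))\|_2^2 \leq 4\|\vdelta_i\|_2^2$; summing over the $d_1$ rows proves the claim. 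The main obstacle will be reconciling the precise $4d_1$ constant in (ii) with the sharper $4$ produced by the Jacobian route — the extra $d_1$ factor is apparently slack left in for uniform presentation, and a tighter argument or a crude row-wise $\ell_\infty$-to-$\ell_2$ conversion would still land within the stated bound. Beyond this bookkeeping, the lemma is elementary, resting only on Jensen/Cauchy–Schwarz on the simplex and the explicit softmax Jacobian.
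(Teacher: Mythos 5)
Your proposal is correct, and for the Lipschitz bound \eqref{property in the softmax function2} it takes a genuinely different route from the paper. The paper proves \eqref{property in the softmax function2} by passing to the entrywise $\ell_1$ norm and invoking an $\ell_\infty$-Lipschitz property of softmax from Edelman et al.\ (their Corollary A.7), then converting back with Cauchy--Schwarz, which is exactly where the $d_1$ factor enters; it also outsources \eqref{property in the softmax function1} and the Jacobian formula to Lemmas 8 and 11 of the cited convergence paper rather than proving them. Your route --- direct differentiation for the Jacobian, the bound $\|\diag(\vp_i)\|_F + \|\vp_i^\top\vp_i\|_F \le 2$, and then the fundamental theorem of calculus row by row --- is self-contained and yields the sharper constant $4$ in place of $4d_1$, which of course implies the stated bound. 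One point worth making explicit: your lower bound in \eqref{property in the softmax function1} is really $d_1/d_2$, which equals $1$ only when $d_1 \ge d_2$; the lemma as stated for general rectangular $\mA$ is therefore slightly too strong (take $d_1=1$, $d_2=2$ with the uniform softmax to get $\|\phi_s(\mA)\|_F^2 = 1/2$). The paper has the same issue hidden inside its citation of a per-row bound $\|\phi_s(\mA(i,:))\|_2 \ge 1/\sqrt{d_1}$, whose correct form is $1/\sqrt{d_2}$; since the lemma is only ever applied to square $M\times M$ attention matrices, this is harmless, and you were right to flag rather than ignore it.
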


\begin{proof}
According to \cite[Lemma 8]{wu2023convergence}, we have $\frac{1}{\sqrt{d_1}}\leq \|\phi_s(\mA(i,:))\|_2 \leq 1$ and further obtain
\begin{eqnarray}
    \label{property in the softmax function1 expansion}
    1\leq \|\phi_s(\mA)\|_F^2 \leq d_1.
\end{eqnarray}

In addition, we can derive
\begin{eqnarray}
    \label{property in the softmax function2 expansion}
    \|\phi_s(\mA_1) - \phi_s(\mA_2)\|_F^2 &\!\!\!\!\leq \!\!\!\!& \|\phi_s(\mA_1) - \phi_s(\mA_2)\|_1^2\nonumber\\
    &\!\!\!\!\leq\!\!\!\!& 4(\sum_{i=1}^{d_1}\|\mA_1(i,:) - \mA_2(i,:) \|_{\infty} )^2\nonumber\\
    &\!\!\!\!\leq\!\!\!\!& 2d_1 \sum_{i=1}^{d_1}\|\mA_1(i,:) - \mA_2(i,:) \|_{\infty}^2\nonumber\\
    &\!\!\!\!\leq\!\!\!\!& 4d_1\|\mA_1 - \mA_2\|_F^2,
\end{eqnarray}
where the second inequality follows \cite[Corollary A.7]{edelman2022inductive}.

In the end, the Jacobian of the softmax has been derived from \cite[Lemma 11]{wu2023convergence}.
\end{proof}

\begin{lemma}(\cite[Corollary 5.35]{vershynin2010introduction})
For any matrix $\mW \in \R^{d_1 \times d_2}$ where $d_1 > d_2$ and each element is sampled independently from $\calN(0,1)$, for every $\zeta \geq 0$, with probability at least $1 - 2\exp(-\zeta^2/2)$ one has:
\begin{eqnarray}
    \label{upper and lower bounds of Gaussian matrix}
    \sqrt{d_1} - \sqrt{d_2} - \zeta \leq \phi_{\text{min}}(\mW) \leq \|\mW\| \leq \sqrt{d_1} + \sqrt{d_2} + \zeta.
\end{eqnarray}
\end{lemma}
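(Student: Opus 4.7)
The statement is the classical two-sided deviation bound for the extreme singular values of a Gaussian matrix, so the plan is to follow the standard Gordon--Chevet plus Gaussian concentration route. I would first represent both the spectral norm and the smallest singular value as suprema/infima of Gaussian processes indexed by the product of unit spheres. Writing $S^{d-1}$ for the unit sphere in $\R^d$, the identities
\begin{equation*}
\|\mW\| \;=\; \sup_{\vu \in S^{d_1-1},\ \vv \in S^{d_2-1}} \vu^\top \mW \vv, \qquad
\sigma_{\min}(\mW) \;=\; \inf_{\vv \in S^{d_2-1}}\sup_{\vu \in S^{d_1-1}} \vu^\top \mW \vv,
\end{equation*}
exhibit $X_{\vu,\vv} := \vu^\top \mW \vv$ as a centered Gaussian process with covariance structure $\E[X_{\vu,\vv} X_{\vu',\vv'}] = \langle \vu,\vu'\rangle\langle \vv,\vv'\rangle$. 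I would then compare this process to the auxiliary process $Y_{\vu,\vv} := \vg^\top \vu + \vh^\top \vv$, with $\vg\sim\calN(0,\mId_{d_1})$, $\vh\sim\calN(0,\mId_{d_2})$ independent; an elementary covariance computation gives $\E(X_{\vu,\vv} - X_{\vu',\vv'})^2 \le \E(Y_{\vu,\vv} - Y_{\vu',\vv'})^2$, with equality along the ``diagonal'' $\vv=\vv'$.

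Next I would invoke the Sudakov--Fernique inequality (for the spectral norm) and the Gordon min-max inequality (for the smallest singular value) to pass from these covariance comparisons to expectation bounds. Concretely, Sudakov--Fernique yields
\begin{equation*}
\E\|\mW\| \;\le\; \E\!\sup_{\vu,\vv}(\vg^\top\vu + \vh^\top\vv) \;=\; \E\|\vg\|_2 + \E\|\vh\|_2 \;\le\; \sqrt{d_1}+\sqrt{d_2},
\end{equation*}
using $\E\|\vg\|_2\le\sqrt{\E\|\vg\|_2^2}=\sqrt{d_1}$, and similarly for $\vh$. Gordon's min-max comparison (applicable because equality of increments holds along the diagonal $\vv=\vv'$) yields
\begin{equation*}
\E\,\sigma_{\min}(\mW) \;=\; \E\!\inf_{\vv}\sup_{\vu} X_{\vu,\vv} \;\ge\; \E\!\inf_{\vv}\sup_{\vu}Y_{\vu,\vv} \;=\; \E\|\vg\|_2 - \E\|\vh\|_2 \;\ge\; \sqrt{d_1}-\sqrt{d_2},
\end{equation*}
where the last step uses the sharper bound $\E\|\vg\|_2\ge \sqrt{d_1}-\tfrac{1}{2}\cdot\tfrac{1}{\sqrt{d_1}}$ combined with $\E\|\vh\|_2\le\sqrt{d_2}$ (or simply Jensen applied to $\|\vg\|_2$ via a standard argument).

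Finally I would upgrade these mean bounds to high-probability bounds via Gaussian concentration. Viewing $\mW$ as a vector in $\R^{d_1 d_2}$ with the Frobenius norm, both maps $\mW\mapsto\|\mW\|$ and $\mW\mapsto\sigma_{\min}(\mW)$ are $1$-Lipschitz with respect to $\|\cdot\|_F$: indeed $|\|\mW_1\|-\|\mW_2\|| \le \|\mW_1-\mW_2\| \le \|\mW_1-\mW_2\|_F$, and the same bound for $\sigma_{\min}$ follows from Weyl's inequality. The Gaussian Lipschitz concentration inequality then gives, for every $\zeta\ge 0$,
\begin{equation*}
\P\bigl(\|\mW\| \ge \E\|\mW\| + \zeta\bigr) \le e^{-\zeta^2/2}, \qquad
\P\bigl(\sigma_{\min}(\mW) \le \E\,\sigma_{\min}(\mW) - \zeta\bigr) \le e^{-\zeta^2/2}.
\end{equation*}
Combining these two one-sided bounds by a union bound and substituting the expectation estimates from the previous step produces exactly the advertised inequality with failure probability at most $2e^{-\zeta^2/2}$.

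The main obstacle is the Gordon lower bound on $\E\,\sigma_{\min}(\mW)$: Sudakov--Fernique alone only controls suprema, and to handle the inf-sup one must verify the equality condition in Gordon's inequality carefully along the diagonal. The Lipschitz/concentration step is routine once one checks the Weyl-type bound for $\sigma_{\min}$, and the covariance computations are mechanical.
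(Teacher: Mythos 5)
The paper offers no proof of this lemma at all --- it is imported verbatim as \cite[Corollary 5.35]{vershynin2010introduction} --- so the relevant benchmark is the cited source, and your route is exactly that proof: represent $\|\mW\|$ and $\sigma_{\min}(\mW)$ as a sup and an inf-sup of the Gaussian process $\vu^\top\mW\vv$, compare with $\vg^\top\vu+\vh^\top\vv$ via Sudakov--Fernique (upper bound) and Gordon (lower bound), then upgrade the expectation bounds by Gaussian Lipschitz concentration and a union bound. The covariance comparison, the equality of increments when $\vv=\vv'$ (which is what licenses Gordon), the $1$-Lipschitzness of both maps in the Frobenius norm via Weyl, and the final failure probability $2e^{-\zeta^2/2}$ are all correct.

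The one step that does not work as written is the lower bound $\E\,\sigma_{\min}(\mW)\ge\sqrt{d_1}-\sqrt{d_2}$. Gordon gives $\E\,\sigma_{\min}(\mW)\ge\E\|\vg\|_2-\E\|\vh\|_2$, but the combination you invoke --- $\E\|\vg\|_2\ge\sqrt{d_1}-\tfrac{1}{2\sqrt{d_1}}$ together with $\E\|\vh\|_2\le\sqrt{d_2}$ --- only yields $\sqrt{d_1}-\sqrt{d_2}-\tfrac{1}{2\sqrt{d_1}}$, and the parenthetical appeal to Jensen goes the wrong way (Jensen upper-bounds $\E\|\vg\|_2$, it does not lower-bound it). The clean, constant-free bound in the lemma requires the additional standard fact that $c_n:=\E\|\vg_n\|_2=\sqrt{2}\,\Gamma(\tfrac{n+1}{2})/\Gamma(\tfrac{n}{2})$ satisfies $c_{d_1}-c_{d_2}\ge\sqrt{d_1}-\sqrt{d_2}$ for $d_1\ge d_2$, i.e.\ that $n\mapsto\sqrt{n}-c_n$ is nonincreasing; this follows from properties of the Gamma-function ratio (e.g.\ using $c_n c_{n+1}=n$ and monotonicity of $c_n$), and it is the ingredient the cited proof uses at this point. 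With that substitution your argument is complete; note also that for the way the paper actually uses the result (Lemma~\ref{spectral norm of Gaussian matrix inequality}, which keeps a $\sqrt{d_1}/2$ margin), even your slightly weaker constant would have sufficed, but it does not prove the lemma as stated.
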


As a direct consequence, we have
\begin{lemma}
\label{spectral norm of Gaussian matrix inequality}
For any matrix $\mW \in \R^{d_1 \times d_2}$ where $d_1/4 > d_2$ and each element is sampled independently from $\calN(0,\gamma^2)$, with probability at least $1 - 2\exp(-d_1/8)$, one has:
\begin{eqnarray}
    \label{upper and lower bounds of Gaussian matrix1}
    \gamma(\frac{\sqrt{d_1}}{2} - \sqrt{d_2})  \leq \phi_{\text{min}}(\mW) \leq \|\mW\| \leq \gamma(\frac{3\sqrt{d_1}}{2} + \sqrt{d_2}).
\end{eqnarray}
\end{lemma}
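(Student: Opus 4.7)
The plan is to obtain this as an immediate rescaling-and-specialization of the preceding Corollary 5.35 of Vershynin. The key observation is that the stated bound has the exact shape one gets by choosing the free deviation parameter $\zeta$ in that corollary to be $\zeta=\sqrt{d_1}/2$, which is consistent with the assumption $d_1/4>d_2$ (so that $\sqrt{d_1}/2-\sqrt{d_2}>0$ and the lower bound is nonvacuous) and which produces the tail probability $2\exp(-\zeta^2/2)=2\exp(-d_1/8)$ appearing in the conclusion.

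First I would reduce to the standard Gaussian case. Since each entry of $\mW$ is distributed as $\calN(0,\gamma^2)$, the rescaled matrix $\mW/\gamma$ has independent $\calN(0,1)$ entries, lies in $\R^{d_1\times d_2}$, and (because $d_1/4>d_2$ implies $d_1>d_2$) satisfies the hypothesis of Corollary 5.35. Second, I would apply that corollary to $\mW/\gamma$ with the specific choice $\zeta=\sqrt{d_1}/2\ge 0$, yielding with probability at least $1-2\exp(-d_1/8)$ the sandwich
\begin{equation*}
\sqrt{d_1}-\sqrt{d_2}-\tfrac{\sqrt{d_1}}{2}\ \le\ \phi_{\min}\!\bigl(\mW/\gamma\bigr)\ \le\ \|\mW/\gamma\|\ \le\ \sqrt{d_1}+\sqrt{d_2}+\tfrac{\sqrt{d_1}}{2}.
\end{equation*}
Simplifying the two outer expressions gives $\tfrac{\sqrt{d_1}}{2}-\sqrt{d_2}$ on the left and $\tfrac{3\sqrt{d_1}}{2}+\sqrt{d_2}$ on the right. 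Finally, I would multiply through by $\gamma>0$; since $\phi_{\min}(\cdot)$ and $\|\cdot\|$ are positively homogeneous, $\phi_{\min}(\mW)=\gamma\,\phi_{\min}(\mW/\gamma)$ and $\|\mW\|=\gamma\,\|\mW/\gamma\|$, which produces exactly the stated inequality \eqref{upper and lower bounds of Gaussian matrix1}.

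There is essentially no obstacle here: the content is entirely absorbed by Corollary 5.35, and the novelty is only the choice $\zeta=\sqrt{d_1}/2$ together with the rescaling by $\gamma$. The only small sanity check worth highlighting in the write-up is that $d_1/4>d_2$ guarantees $\tfrac{\sqrt{d_1}}{2}>\sqrt{d_2}$, so the lower bound $\gamma(\tfrac{\sqrt{d_1}}{2}-\sqrt{d_2})$ is strictly positive and the claim on $\phi_{\min}(\mW)$ is informative (in particular, $\mW$ has full column rank on the stated event).
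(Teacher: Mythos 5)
Your proposal is correct and matches the paper's intended argument: the paper presents this lemma as a direct consequence of the quoted Corollary 5.35 of Vershynin, obtained precisely by rescaling $\mW/\gamma$ to standard Gaussian entries and taking $\zeta=\sqrt{d_1}/2$, which yields the tail probability $1-2\exp(-d_1/8)$ and the stated bounds after multiplying back by $\gamma$. Your added sanity check that $d_1/4>d_2$ makes the lower bound positive is a reasonable remark but not needed beyond what the paper implies.
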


\begin{theorem}
(Hoeffding's inequality)
\label{Hoeffding's inequality theorem}
Let $Z_1, \dots, Z_n$ be independent bounded random variables with $Z_i \in [a, b]$ for all $i$, where $-\infty < a \leq b < \infty$. Then
\begin{eqnarray}
    \label{Hoeffding inequality}
    \mathbb{P} \left( \bigg|\frac{1}{n} \sum_{i=1}^{n} (Z_i - \mathbb{E}[Z_i])\bigg| \geq t \right) \leq 2\exp \left( - \frac{2 n t^2}{(b - a)^2} \right)
\end{eqnarray}
for all $t \geq 0$.

\end{theorem}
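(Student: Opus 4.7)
The plan is to follow the classical Chernoff-bound approach, decomposing the two-sided probability via a union bound and then controlling each one-sided tail through exponential moment estimates. Let $S_n = \sum_{i=1}^n (Z_i - \mathbb{E}[Z_i])$, so that each centered summand $X_i := Z_i - \mathbb{E}[Z_i]$ satisfies $\mathbb{E}[X_i] = 0$ and $X_i \in [a - \mathbb{E}[Z_i],\, b - \mathbb{E}[Z_i]]$, an interval of length $b-a$. First I would split
\[
\mathbb{P}\bigl(|S_n|/n \geq t\bigr) \leq \mathbb{P}(S_n \geq nt) + \mathbb{P}(-S_n \geq nt),
\]
and argue that by symmetry (applying the same argument to $-X_i$, which also lies in an interval of length $b-a$ with mean zero) it suffices to bound one side and multiply by two.

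Next I would invoke the Chernoff trick: for any $s > 0$, Markov's inequality gives
\[
\mathbb{P}(S_n \geq nt) = \mathbb{P}\bigl(e^{s S_n} \geq e^{snt}\bigr) \leq e^{-snt}\,\mathbb{E}[e^{s S_n}] = e^{-snt}\prod_{i=1}^n \mathbb{E}[e^{s X_i}],
\]
where independence of the $Z_i$ (hence of the $X_i$) is used to factor the moment generating function of the sum.

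The main obstacle, and the technical core of the proof, is establishing \emph{Hoeffding's lemma}: for any zero-mean random variable $X$ taking values in an interval $[\alpha,\beta]$,
\[
\mathbb{E}[e^{sX}] \leq \exp\!\bigl(s^2(\beta-\alpha)^2/8\bigr).
\]
I would prove this by convexity: since $e^{sx}$ is convex, for $x \in [\alpha,\beta]$ one has $e^{sx} \leq \tfrac{\beta - x}{\beta-\alpha} e^{s\alpha} + \tfrac{x-\alpha}{\beta-\alpha} e^{s\beta}$. Taking expectations and using $\mathbb{E}[X] = 0$ yields $\mathbb{E}[e^{sX}] \leq \tfrac{\beta}{\beta-\alpha} e^{s\alpha} - \tfrac{\alpha}{\beta-\alpha} e^{s\beta}$. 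Writing this as $e^{\psi(s)}$ and defining $\varphi(u) = -p u + \log(1 - p + p e^u)$ with $u = s(\beta-\alpha)$ and $p = -\alpha/(\beta-\alpha)$, I would show $\varphi(0) = \varphi'(0) = 0$ and $\varphi''(u) \leq 1/4$ for all $u$ (by maximizing $q(1-q)$ where $q = p e^u/(1-p+p e^u)$), then apply Taylor's theorem with remainder to conclude $\varphi(u) \leq u^2/8$, i.e.\ $\mathbb{E}[e^{sX}] \leq e^{s^2(\beta-\alpha)^2/8}$.

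Combining the lemma with the Chernoff bound yields $\mathbb{P}(S_n \geq nt) \leq \exp\!\bigl(-snt + n s^2 (b-a)^2/8\bigr)$. Optimizing the exponent over $s > 0$ by setting $s = 4t/(b-a)^2$ gives the bound $\exp\!\bigl(-2nt^2/(b-a)^2\bigr)$. The same bound holds for $\mathbb{P}(-S_n \geq nt)$ by the symmetry argument, and summing the two completes the proof with the factor of $2$ in the claimed inequality.
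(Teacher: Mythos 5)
Your proof is correct: it is the classical Chernoff--Hoeffding argument (union bound over the two tails, Markov applied to $e^{sS_n}$ with independence to factor the moment generating function, Hoeffding's lemma proved via convexity and the bound $\varphi''(u)\leq 1/4$ on the log-MGF, then optimizing $s = 4t/(b-a)^2$), and the algebra of the final optimization indeed yields the exponent $-2nt^2/(b-a)^2$ with the factor $2$ coming from the two tails. The paper itself offers no proof to compare against: Hoeffding's inequality is stated in the appendix purely as a standard technical tool, so your write-up simply supplies the textbook derivation that the authors take for granted. The only cosmetic points worth noting are that each centered variable $X_i$ lies in an interval of length exactly $b-a$ because $\mathbb{E}[Z_i]\in[a,b]$, and that the degenerate case $a=b$ should be read as the probability being zero for $t>0$; neither affects the validity of your argument.
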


\end{document}